\newtheorem{thm}{Theorem}
\newtheorem{prop}{Proposition}
\newtheorem{cor}{Corollary}
\theoremstyle{definition}
\newtheorem{defn}{Definition}
\theoremstyle{definition}
\newtheorem{rem}{Remark}
\theoremstyle{definition}
\newtheorem{exam}{Example}
\journal{Journal of Logical and Algebraic Methods in Programming}
\begin{document}

\begin{frontmatter}

%% Title, authors and addresses

%% use the tnoteref command within \title for footnotes;
%% use the tnotetext command for theassociated footnote;
%% use the fnref command within \author or \address for footnotes;
%% use the fntext command for theassociated footnote;
%% use the corref command within \author for corresponding author footnotes;
%% use the cortext command for theassociated footnote;
%% use the ead command for the email address,
%% and the form \ead[url] for the home page:
%% \title{Title\tnoteref{label1}}
%% \tnotetext[label1]{}
%% \author{Name\corref{cor1}\fnref{label2}}
%% \ead{email address}
%% \ead[url]{home page}
%% \fntext[label2]{}
%% \cortext[cor1]{}
%% \affiliation{organization={},
%%             addressline={},
%%             city={},
%%             postcode={},
%%             state={},
%%             country={}}
%% \fntext[label3]{}

\title{The generalised distribution semantics and projective families of distributions\tnoteref{LMU}}

%% use optional labels to link authors explicitly to addresses:
%% \author[label1,label2]{}
%% \affiliation[label1]{organization={},
%%             addressline={},
%%             city={},
%%             postcode={},
%%             state={},
%%             country={}}
%%
%% \affiliation[label2]{organization={},
%%             addressline={},
%%             city={},
%%             postcode={},
%%             state={},
%%             country={}}

\author{Felix Weitkämper}
\ead{felix.weitkaemper@lmu.de}

 \affiliation{organization={Institut {für} Informatik, Ludwig-Maximilians-Universität München},%Department and Organization
addressline={Oettingenstr.~67}, 
             city={80538 München},
             country={Germany}}

\tnotetext[LMU]{This contribution was supported by LMUexcellent, funded
by the Federal Ministry of Education and Research (BMBF) and the Free State
of Bavaria under the Excellence Strategy of the Federal Government and the
Länder.}

\begin{abstract}
  We generalise the distribution semantics underpinning probabilistic logic programming by distilling its essential concept, the separation of a free random component and a deterministic part.
  This abstracts the core ideas beyond logic programming as such to encompass frameworks from probabilistic databases, probabilistic finite model theory and discrete lifted Bayesian networks.
  To demonstrate the usefulness of such a general approach, we completely characterise the projective families of distributions representable in the generalised distribution semantics and we demonstrate both that large classes of interesting projective families cannot be represented in a generalised distribution semantics and that already a very limited fragment of logic programming (acyclic determinate logic programs) in the deterministic part suffices to represent all those projective families that are representable in the generalised distribution semantics at all. 
\end{abstract}

%%Graphical abstract
%\begin{graphicalabstract}
%\includegraphics{grabs}
%\end{graphicalabstract}

%%Research highlights
\begin{highlights}
\item The separation of free random and deterministic parts is a \\key modelling principle
\item There are inherent limitations to the expressivity of such models
\item Probabilistic answer set or disjunctive programs provably \\add expressivity
\end{highlights}

\begin{keyword}
 Distribution semantics \sep Projectivity \sep Exchangeability \sep Independence Property \sep Stochastic Block Models
\end{keyword}

\end{frontmatter}

%% \linenumbers

%% main text
\section{Introduction}
The distribution semantics, first introduced explicitly by Poole  \cite{Poole93} and Sato \cite{Sato95}, kickstarted the development of probabilistic logic programming, a paradigm that extends traditional logic programming with probabilistic primitives to enable probabilistic relational programming with recursion. By cleanly separating the probabilistic part from the deterministic part, the distribution semantics allows the use of techniques developed over decades of logic programming research, such as negation as failure which unlocks recursion as a programming tool.

From this point of view, probabilistic logic programming is a specific set-up of logic programs over independent probabilistic facts. There is no intrinsic reason, though, why the fundamental principle of separating probabilistic and logical components of a statistical relational formalism should be limited to this specific set-up. In this paper, the distribution semantics is studied as the abstract concept of defining a statistical relational specification by specifying an independent probabilistic part and an arbitrary deterministic part on top of that.

There are several motivations for studying the concept in this generality.

The original motivation for the distribution semantics comes from probabilistic logic programming. In its classical formulation, this is considered to be a Datalog program over independent probabilistic facts.
However, the logic programming paradigm includes far more than Datalog; its main proponent, Prolog, is a Turing-complete programming language whose support for metaprogramming via higher-order predicates is a key feature.
Already today the main implementations of probabilistic logic programming such as cplint or ProbLog  support higher-order predicates and meta-calls \citep{deRaedtK15}.

The concept of the distribution semantics is also key to probabilistic databases in the guise of the tuple-independent database model \citep{SuciuORK11}. As Datalog is but one of several query languages considered in the database community, combining the tuple-independent database model with different query logics is very natural. In particular, aggregates are commonly supported by real-life database query languages without being expressible in the classical probabilistic logic programming concept.

On the theoretical side, Cozman and Maua's  \citep{CozmanM19} probabilistic finite model theory centres around evaluating the expressivity of different logics above independent probabilistic facts.
By evaluating the distribution semantics as a general concept, one can distinguish sharply between limitations on expressivity induced by the logic employed in the deterministic part as opposed to intrinsic limitations occasioned by the separation of logic and probability.

Finally, sometimes models that appear to consist of several probabilistic layers can in fact be reduced to a single independent probability distribution. This is true for lifted Bayesian networks based on discrete conditional probability tables, for instance.
One such case where the logic employed is very different from classical first-order or fixed-point logic is Koponen's \cite{Koponen20} lifted Bayesian networks, which are formulated in terms of conditional probability logic.
This logical language allows the expression of statistical statements within the defining formulas of intensional predicates.

To show the potential of the general framework, we characterise the \emph{projective families of distributions} obtainable in the generalised distribution semantics.

Projectivity was recently introduced to the artificial intelligence literature \citep{JaegerS18,JaegerS20} as a strong condition guaranteeing that marginal probabilities are independent of the domain into which the constants invoked in a query are embedded.

However, they have been studied for decades in the field of \emph{pure inductive logic}, where they are used to characterise degrees of belief that rational agents could adopt about the world they might be inhabiting \citep{Carnap50,ParisV15}.
While their focus has traditionally been on unary signatures, general polyadic signatures have recently been investigated in more detail \citep{RonelV16}. 

Harnessing techniques developed in probability theory, Jaeger and Schulte \citep{JaegerS20} showed that projective families of distributions can be represented by exchangeable arrays.  
However, existing statistical relational formalisms have proven unable to express the wide range of possible projective families \citep{Weitkaemper21,MalhotraS22}, and that in particular probabilistic logic programs are restricted to a narrow subclass of projective families of distributions. 
Such results have heretofore been obtained for classical probabilistic logic programming from the asymptotic theory of the concrete fixed-point logic involved in the deterministic part \citep{Weitkaemper21}.
The zero/one laws of finite model theory, on which such arguments are based, are very brittle, though.
A dependency on, say, the number of domain elements being even, would immediately invalidate such arguments, although they are easy to represent using database aggregates or Prolog-style metaprogramming. 

The generalised distribution semantics introduced here allows us to show that the reasons for this do not just lie in limitations of the concrete logical and probabilistic framework used, but are inherent to the underlying concept of neatly dividing logic and probability into loosely coupled components.    
We build on recent work from pure inductive logic \citep{RonelV16} to obtain our characterisation, and we see that the abstract and general assumptions of the generalised distribution semantics already severely limit the representable projective families of distributions.

\section{Frameworks}\label{Fw}

In this section, we introduce our main framework, the generalised distribution semantics, and we provide an alternative formulation of projectivity that integrates well with the concept.

\subsection{Preliminaries}

Our setting is that of finite relational \emph{signatures}, which are finite sets of \emph{relation symbols} of given natural number \emph{arities}.
For such a signature $L$ and finite set $D$, an \emph{$L$-structure} with \emph{domain} $D$ is given by a map that allocates to every $n$-ary relational symbol $R$ in $L$ a subset of $D^n$.

Expressions of the form $R(a_1, \dots, a_n)$, for relation symbols $R$ of arity $n$ and $a_1, \dots, a_n \in D$ are known as \emph{ground atoms}; they are true in a structure $\omega$ with domain $D$, written as $\omega \models R(a_1, \dots, a_n)$, if $(a_1,\dots,a_n)$ lies in the image of $R$ under the map defining $\omega$. In this case, $\omega$ is said to be a \emph{model} of $R(a_1, \dots, a_n)$.
Ground atoms and their negations are known as \emph{ground literals}, and their truth values as well as those of more general Boolean combinations (\emph{ground formulas}) are given as usual in first-order logic.
We use notions such $\varphi(\vec{a})$ for a ground formula $\varphi$ to express that only domain elements in  $\vec{a}$ occur in $\varphi$.

If $\omega'$ and $\omega$ are $L$-structures with domains $A$ and $B$ respectively, then an \emph{embedding} from $\omega'$ to $\omega$ is an injective map $\iota$ from $A$ to $B$ such that for all $a_1, \dots, a_n \in A$ and all $n$-ary relation symbols $R$ in $L$, $R(a_1, \dots, a_n)$ is true in $\omega'$ if and only if $R(\iota(a_1), \dots, \iota(a_n))$ is true in $\omega'$. 
  
\begin{defn}
  When introducing the following notation for $L$-structures, $A \subseteq B$ are sets and $L' \subseteq L$ are signatures.
  \begin{itemize}
    \item If $\omega$ is an $L$-structure with domain $B$, then the \emph{restriction} of $\omega$ to $A$ is the $L$-structure $\omega_A$ on $A$ for which $\omega_A \models R(\vec{a})$ if and only if $\omega \models R(\vec{a})$, for any relation symbol $R$ of $L$ and any tuple $\vec{a}$ of elements of $A$. In this situation, $\omega$ is called an \emph{extension} of $\omega_A$ to $B$. The inclusion map is always an embedding from   $\omega_A$ to $\omega$. 
    \item If $\omega$ is an $L$-structure with domain $A$, then the \emph{reduct} of $\omega$ to $L'$ is the $L'$-structure $\omega^{L'}$ with domain $A$ for which $\omega^{L'}\models R(\vec{a})$ if and only if $\omega \models R(\vec{a})$, for any relation symbol $R$ of $L'$ and any tuple $\vec{a}$ of elements of $A$. In this situation, $\omega$ is called an \emph{expansion} of $\omega^{L'}$ to $L$.      
  \end{itemize}
\end{defn}

\begin{defn}
  For any signature $L$ and finite domain $D$ we define ${\Omega_D}^L$ to be the set of $L$-structures with domain $D$.
  
  As an element of ${\Omega_D}^L$, an $L$-structure with domain $D$ is referred to as an \emph{$L$-world with domain $D$}.
  A \emph{random $L$-world} with domain $D$ is a probability distribution over the set of $L$-structures with domain $D$.

For any injective function of finite sets $\iota:A\rightarrow B$, $\Omega^L(\iota):{\Omega_B}^L\rightarrow {\Omega_A}^L$ maps every $L$-structure $\omega$ with domain  $B$ to the $L$-structure $\omega_{\iota}$ on domain $A$, which models $R(\vec{a})$  if and only if  $\omega \models R(\vec{\iota(a)})$.
If $\iota$ is the inclusion map of an $A\subseteq B$, then $\omega_{\iota}$ is just the restriction of $\omega$ to $A$.
\end{defn}

By viewing ${\Omega_D}^L$ as a probability space, a ground formula can be identified with the set of worlds satisfying it. In this way, a random world defines not just the probabilities of individual worlds, but also of ground formulas.

We study not just individual random worlds, but general models defining a random world on any domain. 

\begin{defn}
  An \emph{$L$ family of distributions} $P$ is a map taking any finite set $D$ as input and returning a random $L$-world $P_D$ on $D$. 
\end{defn}

%% \begin{defn}
%% We use $\mathrm{Rw}^L$ to denote the subcategory of MEAS given by measure spaces of the form $(\Omega_D^L)$ for a finite set $D$ and where the morphisms are only those measure-preserving maps that are induced by an injective function $\iota$ between finite sets as described above.  
%% \end{defn}

\subsection{Projectivity}
Now we introduce projective families of distributions in a slightly more general way than Jaeger and Schulte \cite{JaegerS18,JaegerS20}. 

\begin{defn}
Let $P$ be an $L$ family of distributions.
Then $P$ is \emph{projective}  if for any two finite sets $D'$ and $D$, any injective map $\iota: D' \hookrightarrow D$ and any $L$-structure $\mathfrak{X}$ on $D'$ the following holds:
\[ P_{D'}(\mathfrak{X}) = P_{D}(\{\omega\in {\Omega_D}^L \mid \omega_{\iota} = \mathfrak{X}\}) \]
\end{defn}

This notion of projectivity is a direct generalisation of the one advanced by Jaeger and Schulte \citep{JaegerS18,JaegerS20}. More precisely, every projective family of distributions in their sense extends uniquely to a projective family of distributions in our sense, and each of our projective families of distributions extend a projective family in their sense \citep[Proposition 1]{Weitkaemper23}.

We illustrate this definition with  toy ProbLog programs expressing homophily:  
Smokers are more likely to be friends with other smokers.
\begin{exam}
Consider this classic solution using a recursive dependency: 
    \begin{verbatim}
        0.3 :: influences(X,Y).
        0.2 :: starts_smoking(X).
        0.2 :: friends(X,Y).
        smokes(X) :- starts_smoking(X).
        smokes(X) :- friends(X,Y), smokes(Y), influences(Y,X).       
    \end{verbatim}
    Consider a domain in which there is only a single individual. 
    In that case, the probability for this individual to start smoking is 0.2, as the second clause can only be lead to smoking if there is a smoking friend.

    In a domain with two individuals, there is a second way for an individual to smoke, namely that the other individual starts smoking, is a friend and then influences that individual. Therefore, the probability of smoking increases. 
    Thus, the model is not projective since the probability of the single-person world in which that person smokes increases by embedding it into a larger domain. 

Now consider another way of modelling the correlation between smoking and friendship:
    \begin{verbatim}
        0.3 :: smokes(X).
        0.1 :: become_friends(X,Y).
        0.3 :: smoke_together(X,Y).
        friends(X,Y) :- become_friends(X).
        friends(X,Y) :- smokes(X), smokes(Y), smoke_together(X,Y).       
    \end{verbatim}
    In this program, the probability for any individual to smoke is always 0.3, regardless of the size of the domain. 
    Similarly, the probability of friendship depends only on the probability that each of two individuals smoke, that they smoke together and that they become friends.
    Each of these are independent of the remainder of the domain, and therefore this model induces a projective family of distributions. 
    More generally, any determinate ProbLog program (whose clause bodies only contain variables also occurring in the head of that clause) induces a projective family of distributions \citep{JaegerS18}.
\end{exam}

%% \begin{proof}
%%   The sets $\{1,\dots,n\}$ with inclusion maps are a skeletal subcategory of $\mathrm{SET}_{\mathrm{inj}}$. 
%% \end{proof}

Projectivity has broad implications for both learning and reasoning across domain sizes. 
Its definition guarantees immediately that marginal probabilities do not change when computed in differently sized domains, and Jaeger and Schulte \citep{JaegerS18} have shown that under some additional assumptions, projective families of distributions allow for statistically consistent parameter estimation from subdomains.
For several reasons, this is particularly important for statistical relational formalisms such as probabilistic logic programming. 
Firstly, the general specification of a model independently of a fixed domain is one of the main attractions of using a statistical relational model. 
If the behaviour of that model on domains of different sizes is intransparent or undesired, this undermines the generality of the specification.
Secondly, parameter fitting in statistical relational models is generally difficult, as it usually relies on inference \citep[Chapter 7]{deRaedtKNP16}. 
Under the usual complexity-theoretic assumptions, marginal inference itself is generally intractable even for restricted languages \citep{CozmanM18,VandenBroeckKNP21}, and therefore parameter fitting directly on large domains can be infeasible. 
Jaeger and Schulte's results on statistical consistency \cite{JaegerS18} open up the possibility of learning on random sampled subsets of the target domain without distorting the estimated parameters, avoiding inference directly on the large target domain.   

In pure inductive logic, projective families of distributions are studied in the guise of \emph{exchangeable probability functions}, which operate on countably infinite domains \citep{ParisV15}. 
The equivalence of projective families and exchangeable probability functions is well-known, and follows from the equivalence of exchangeable distributions on infinite domains and projective families of distributions. 
In the following, we uniformly adopt the terminology of projective families of distributions throughout, even when referring to concepts from pure inductive logic.  
A thorough technical investigation of these and more general notions of projective families can be found in \cite{Weitkaemper23}.

\subsection{The distribution semantics}

The key idea of the distribution semantics is to split the complex distribution into two parts, one purely probabilistic (`free') and one purely deterministic. We first introduce the probabilistic part.

\begin{defn}
  A \emph{free} $L$-family of distributions is a projective family of distributions $P$ defined from a \emph{weight function} $w:L\rightarrow (0,1)$ by setting \[P_A(\omega) = \left(\prod_{\substack{\vec{a}\in A,R\in L\\ \omega \models R(\vec{a})}} w(R)\right) \times  \left(\prod_{\substack{\vec{a}\in A,R\in L\\ \omega \models \neg R(\vec{a})}} (1 - w(R))\right).\]
\end{defn}

It is easy to see that any such weight function indeed defines  an $L$-family of distributions $P_w$. 

\begin{exam}
  On any node set $D$, consider a random directed graph, with an edge relation $E$ in which there is an probability $p$ that $E(a,b)$ holds for any $a,b \in D$, independently for all pairs $(a,b)$ of nodes in $D$.
  This is a free $L$-family of distributions, where $L = \{E\}$ and $w(E) = p$.
\end{exam}

We now turn to the deterministic part.

\begin{defn}
A \emph{choice of expansions} (from $L'$ to $L$) is a family of maps $\Pi:{\Omega_D}^{L'}\rightarrow {\Omega_D}^L$ for all finite sets $D$ such that $\Pi(\omega)$ expands $\omega$ for all $\omega\in {\Omega_D}^{L'}$.    
\end{defn}

\begin{defn}
  A \emph{generalised probabilistic logic program (or generalised PLP)} $(P,\Pi)$ is an $L$ family of distributions whose data is given by a free $L'$ family of distributions $P$ and a choice of expansions $\Pi$ from $L'$ to $L$, for an $L'\subseteq L$.
  For any finite set $D$ and every $\Delta\subseteq {\Omega_D}^L$, the probability of $\Delta$ under  $(P,\Pi)$ is given by $P_D(\Pi^{-1}(\Delta))$. 
\end{defn}
We can now see the different application areas mentioned in the introduction as special cases of generalised probabilistic logic programs. 
\begin{exam}
  Probabilistic logic programs  under the distribution semantics \citep{Poole93,Sato95,Riguzzi23} are the paradigmatic examples. They are generalised probabilistic logic programs in which the choice of expansions is given by a Datalog program,
  or a program in the fragment of Prolog supported by the probabilistic logic programming language or system used. 

  Queries over tuple-independent probabilistic databases \citep{SuciuORK11} can be seen as generalised probabilistic logic programs in which the choice of expansions is given by an expression in the associated query language.
  
  The relational Bayesian network specifications studied in Cozman and Maua's probabilistic finite model theory \citep{CozmanM18,CozmanM19}  are generalised probabilistic logic programs in which the choice of extensions is given by a first-order formula. Analogous to the work done in finite model theory, it would be very natural in this context to study generalised probabilistic logic programs whose choices of extensions are given by other logical formalisms such as counting logics, higher-order or fixed-point logics.

  Finally, Koponen's lifted Bayesian networks \cite{Koponen20} are generalised probabilistic logic programs whose choice of extensions are given by formulas in conditional probability logic \citep[Proposition 1]{Weitkaemper22a}.
\end{exam}

A particularly simple subclass are those whose choice of expansions is given by quantifier-free formulas.
\begin{defn}
  A choice of expansions $\Pi$ from $L'$ to $L$ is \emph{determinate} if for every $R \in L\setminus L'$ there is a quantifier-free $L'$-formula $\varphi_R$ such that $R(a_1, \dots, a_n)$ is true in  $\Pi(\omega)$ if and only if $\varphi_R(a_1, \dots, a_n)$ is true in $\omega$. 

  A \emph{determinate} probabilistic logic program is a generalised probabilistic logic program whose  choice of expansions is determinate. 
\end{defn}

Weitk\"amper \cite{Weitkaemper21} showed that determinate probabilistic logic programs correspond exactly to those whose choice of expansions is given by a determinate logic program, which can even be chosen to be acyclic, and that all such generalised probabilistic logic programs are projective.

\section{Classification of projective generalised probabilistic logic programs}
\subsection{Strong independence property}\label{subsec:SIP}

We identify projective generalised PLP as those satisfying the \emph{strong independence property}, first isolated in the context of pure inductive logic \citep{ParisV15} by Ronel and Vencovsk\'{a} \cite{RonelV16}.

An important auxiliary concept in the analysis is the $g$-\emph{trace}, which is usually defined in terms of formulas satisfied by a given random world:

\begin{defn}
  Let $\omega$ be an $L$-world. Then the $g$-ary (syntactic) \emph{trace}   $\mathrm{tr}_g(\omega)$ of $\omega$ is defined as the set of all ground literals $\varphi$ with at most $g$ constants that hold in $\omega$. A $g$-ary trace over a domain $D$ is a $g$-ary trace over any $L$-world with domain $D$. A $g$-ary trace over $L$ is a trace over the domain $\{1,\dots,g\}$. 

  Let $\varphi$ be a quantifier-free $L$-formula whose variables have been ground to elements of a domain. Then $\varphi$ \emph{mentions} a tuple $a_1,\dots,a_n$ if there is an atomic subformula $R(b_1,\dots,b_m)$ of $\varphi$ such that $\{a_1,\dots,a_n\}\subseteq \{b_1,\dots,b_m\}$. 
\end{defn}

\begin{exam}
  Consider a 2-coloured directed graph $G$, equipped with a loop-free binary edge relation $E$ and a  unary colour relation $C$, where  $C(a)$ denotes one colour and  $\neg C(a)$ the other.
  Then the 1-ary trace of $G$ is the collection of all literals of the form $C(a)$ or $\neg C(a)$ for nodes $a$ in $G$, denoting the colour of every node, as well as the set of literals $\neg E(a,a)$ for nodes $a$, expressing that $G$ is loop-free.
  The 2-ary trace of $G$ includes all literals of the type $E(a,b)$ or $\neg E(a,b)$, as well as those included in the 1-trace. The 2-trace therefore completely specifies the coloured graph $G$.
\end{exam}
Note that the $k$-trace of an $L$-world completely specifies that world, where $k$ is at least the highest arity occurring in $L$. 

Since semantic concepts fit better into our framework than criteria defined in terms of quantifier-free formulas, we give equivalent semantic notions:

\begin{defn}
  Let $\omega$ be an $L$-structure. Then the $g$-ary (semantic) trace of $\omega$ is defined as the set of all worlds $\omega'$ on the same domain as $\omega$ such that for every subset $D$ of that domain of cardinality not exceeding $g$,  $\omega'_D = \omega_D$. A $g$-ary trace over a domain $D$ is a $g$-ary trace over an $L$-world with domain $D$.

  Let $\varphi$ be a set of $L$-structures with domain $D$. Then $\varphi$ \emph{mentions} a tuple of distinct elements $a_1,\dots,a_{n}$ of $D$ if there are $\omega_1$ and $\omega_2$ such that ${\omega_1}_{D'} = {\omega_2}_{D'}$ for all $D' \subseteq D$ with $\{a_1,\dots,a_{n}\}\nsubseteq D'$ and $\omega_1 \in \varphi$, but $\omega_2 \notin \varphi$.    
\end{defn}

\begin{prop}\label{prop_trace}
  The semantic trace of  a possible world  $\omega$ are exactly the models of the syntactic trace of $\omega$.

  Whenever a formula $\varphi$ does not (syntactically) mention a tuple, then the models of $\varphi$ do not (semantically) mention it. When a set does not mention a tuple semantically, this set is the set of models of a sentence which does not (syntactically) mentions that tuple.  
\end{prop}

\begin{proof}
  We first show the statement for traces.
  Let $\theta$ be the syntactic trace of $\omega$.
  Then for any world $\omega'$ satisfying $\theta$ and every subset $D$ of cardinality $g$, ${\omega'}_{D}$ has the same $g$-trace over $D$, namely the restriction of $\theta$ to $D$, which completely specifies ${\omega'}_D$.
  Conversely, if ${\omega'}_D = \omega_D$ for all $D$ of cardinality $g$, then $\omega'$ satisfies the same formulas with entries from $D$ as $\omega$, for all $g$-tuples of entries $D$. This implies that $\omega'$ satisfies $\theta$.

  We now show the statement for mentions.
  Let $\varphi$ not (syntactically)  mention $a_1,\dots, a_n$.
  Then for all atoms $\lambda$ in $\varphi$, there is an $a_\lambda \in \{a_1,\dots, a_n\}$ that does not occur in $\lambda$.
  Let ${\omega_1}_D = {\omega_2}_D$ for all $D$ omitting an $a_i$ and let $\omega_1 \models \varphi$.
  This implies that ${\omega_1}_D$ and ${\omega_2}_D$ agree on the truth value of all atoms whose parameters are contained in such a $D$, in other words, on all those atoms for which there is an $a_\lambda \in \{a_1,\dots, a_n\}$ that does not occur in $\lambda$. Thus ${\omega_1}_D$ and ${\omega_2}_D$ agree on the truth value of all atoms in $\varphi$, and thus on the truth value of $\varphi$ itself. 
  Conversely, let $\tilde{\varphi}$ be a set of worlds that does not mention a tuple $a_1,\dots,a_n$.
  Then let $\varphi$ be defined as follows:\\
   For every $\omega \in \tilde{\varphi}$, let $\varphi_{\omega}$ be the conjunction of the $|D|$-traces of $\omega_D$ for all $D$ which omit at least one $a_i$. 
   Then $\varphi$ is defined as the disjunction of the $\varphi_{\omega}$ for all $\omega\in \tilde{\varphi}$.
  Clearly, $\varphi$ does not mention $a_1,\dots, a_n$. 
  It remains to show that the set of models of $\varphi$ is exactly $\tilde{\varphi}$.
  Every element $\omega$ of $\tilde{\varphi}$ is a model of $\varphi$ since it satisfies $\varphi_{\omega}$. 
  To see that the converse is true, let $\omega'\models \varphi$.
  Then $\omega' \models \varphi_{\omega}$ for an $\omega \in \tilde{\varphi}$.
  This implies that ${\omega'}_D = {\omega}_D$ for all $D$ omitting an $a_i$.
  By the semantic mentioning condition, this implies that $\omega' \in \tilde{\varphi}$.
\end{proof}

\begin{prop}
  A generalised probabilistic logic program defines a projective family of distributions if and only if its associated choice of expansions $\Pi$ commutes with restrictions and extensions, that is, if for any injective $\iota:A\rightarrow B$, the following square commutes:
  \[
\begin{tikzcd}
{\Omega_A^{L'}} \arrow[r, "\Pi "]                    & {\Omega_A^{L}}                    \\
{{\Omega_B}^{L'}} \arrow[u, "\Omega^{L'}(\iota)"] \arrow[r, "\Pi "] & {{\Omega_B}^{L}} \arrow[u, "\Omega^{L}(\iota)"]
\end{tikzcd}
\]

This can also be expressed by saying that in this situation, $\Pi\circ \pi = \pi \circ \Pi$.
\end{prop}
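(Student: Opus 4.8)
The plan is to translate the projectivity criterion for the family that $(P,\Pi)$ defines into a statement purely about the maps $\Pi$ and $\pi$, and then to prove the two implications separately. First I would fix notation: for a finite set $D$ let $Q_D := (\Pi_D)_* P_D$ be the pushforward measure on $\Omega_D^L$, so that $Q_D(\Delta) = P_D(\Pi_D^{-1}(\Delta))$ is exactly the probability the program assigns to $\Delta$. Because $F\circ Q = \Omega^L$ is forced on objects, the only morphism lying over $\Omega^L(\iota)$ is the restriction $\pi$ itself, and restrictions compose to restrictions so functoriality is automatic; hence $Q$ is projective precisely when every $\pi : \Omega_B^L \to \Omega_A^L$ is a morphism of MEAS, i.e. measure preserving: $Q_B(\pi^{-1}(\Gamma)) = Q_A(\Gamma)$ for every injective $\iota : A \to B$ and every $\Gamma \subseteq \Omega_A^L$.

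Next I would expand both sides of this equation. The left-hand side is $P_B((\pi\circ\Pi)^{-1}(\Gamma))$ by definition of $Q_B$. For the right-hand side I use that the free family $P$ is itself projective, so restriction on $L'$-structures is $P$-measure preserving; hence $Q_A(\Gamma) = P_A(\Pi^{-1}(\Gamma)) = P_B((\Pi\circ\pi)^{-1}(\Gamma))$. Thus projectivity of $Q$ is equivalent to $(\pi\circ\Pi)_* P_B = (\Pi\circ\pi)_* P_B$ for all $\iota$, where $\pi\circ\Pi$ and $\Pi\circ\pi$ are the two routes $\Omega_B^{L'} \to \Omega_A^L$ around the square. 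The ``if'' direction is now immediate: a commuting square makes these two maps literally equal, so their pushforwards coincide and $Q$ is projective.

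The real work is the ``only if'' direction, and the main obstacle is that equality of pushforward measures does not by itself force the two maps to agree. I would overcome this with three observations. First, $\Pi$ is injective: since $\Pi(\sigma)$ expands $\sigma$, its $L'$-reduct recovers $\sigma$, whence $\Pi^{-1}(\{\Pi(\sigma)\}) = \{\sigma\}$. Second, reduct and restriction commute, so the $L'$-reduct of $\pi(\Pi(\omega))$ equals $\pi(\omega)$, which is also the $L'$-reduct of $\Pi(\pi(\omega))$; consequently, for any fixed $\omega_0$, writing $\sigma_0 := \pi(\omega_0)$ and $\Gamma := \{\Pi(\sigma_0)\}$, one gets the one-sided inclusion $(\pi\circ\Pi)^{-1}(\Gamma) \subseteq (\Pi\circ\pi)^{-1}(\Gamma) = \pi^{-1}(\{\sigma_0\})$. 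Third, $P_B$ has full support, every structure receiving positive weight.

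Putting these together, the equivalence from the first two paragraphs, applied to $\Gamma = \{\Pi(\sigma_0)\}$, forces the two fibres above to carry equal $P_B$-measure; the inclusion together with full support then upgrades this to genuine set equality; and since $\omega_0$ manifestly lies in the larger fibre $\pi^{-1}(\{\sigma_0\})$, it must lie in the smaller one as well, that is $\pi(\Pi(\omega_0)) = \Pi(\sigma_0) = \Pi(\pi(\omega_0))$. As $\omega_0$ and $\iota$ were arbitrary, the square commutes. I expect the crux to be the second observation: recognising that the reduct argument delivers exactly the inclusion needed to convert ``equal measure'' into ``equal fibres'' via full support. The remaining manipulations with pushforwards are routine.
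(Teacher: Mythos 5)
Your proof is correct, and for the ``if'' direction it coincides with the paper's: both reduce projectivity of the induced family to measure-preservation of the restriction maps $\pi$ on $(\Omega^L,\Pi^*\mu)$, and both then use $\Pi^{-1}(\pi^{-1}(\Gamma)) = \pi^{-1}(\Pi^{-1}(\Gamma))$ together with projectivity of the free family $P$. The genuine difference is in the ``only if'' direction. The paper disposes of it in one sentence (``as the maps here coincide with the maps in the requirements of the proposition, the only if direction is immediate''), which silently passes from equality of pushforward measures to equality of the underlying maps. You correctly identify that this passage is the real content and supply the missing argument: $\Pi$ is injective because reducting $\Pi(\sigma)$ recovers $\sigma$; commutation of reduct with restriction yields the one-sided fibre inclusion $(\pi\circ\Pi)^{-1}(\{\Pi(\sigma_0)\}) \subseteq (\Pi\circ\pi)^{-1}(\{\Pi(\sigma_0)\})$; and full support of the free measure (guaranteed since the weights lie in the open interval $(0,1)$) upgrades equality of measures of nested sets to equality of the sets themselves, hence pointwise commutation of the square. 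This is a more careful and, strictly speaking, more complete argument than the one printed in the paper; what it costs is the explicit use of full support of $P$, which is harmless here but worth flagging as the hypothesis that makes ``equal pushforwards'' imply ``equal maps''.
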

\begin{proof}
  To make the role of generalised probabilistic logic programs in this argument more transparent, we cast this proof in the language of category theory.
  All notions we refer to here can be found in Chapter 1 of Leinster's \cite{Leinster14} textbook, or in any other introduction to category theory.
  In the following, let $\mathrm{SET}_{\mathrm{inj}}$ denote the category of finite sets, with injective maps as morphisms,
  and let MEAS denote the category of finite measure spaces, with measure-preserving maps as morphisms.
  For any category CAT, let $\mathrm{CAT}^{\mathrm{op}}$ denote the opposite category. 
  Then a  projective $L$-family of distributions $P$ is precisely a contravariant functor from  ${\mathrm{SET}_{\mathrm{inj}}}$ to MEAS which extends $\Omega^L$.
  In fact, every projective family of distributions is an  equivalence of categories from  ${\mathrm{SET}_{\mathrm{inj}}}^{\mathrm{op}}$ to its image $\mathrm{Im}(P)$, the subcategory of MEAS whose objects are the measure spaces $P_w(A)$ for a finite set $A$ and whose morphisms are the measure-preserving maps induced by injective functions between finite sets.
  Consider the functor $\Delta$ from $\mathrm{Im}(P)$ to ${\mathrm{SET}_{\mathrm{inj}}}^{\mathrm{op}}$ that maps $P_w(A)$ to $A$ and maps any morphism to the injective function inducing it.
  This is well-defined.
  Indeed, for any $\iota:A \hookrightarrow B$ and any $a \in A$ we can consider the structure $\omega$ with domain $B$ in which for an arbitrary relation $R$, $R(\iota(a),\dots,\iota(a))$ is true and $R$ is false for all other tuples.
  Then $\Omega_L(\iota)(\omega)$ is the structure with domain $A$ in which $R$ holds for $(a,\dots,a)$ and no other tuple.
  Thus $\Omega_L(\iota)$ uniquely identifies $\iota(a)$, for any $a \in A$.
  By construction, $\Delta$ is an inverse of $P_w(A)$, and by projectivity it is indeed a functor.
    
  In particular, the free part $P$ of the logic program induces such an equivalence of categories.
  Hence the generalised PLP is functorial if and only if the map $\Pi^*$ from $\mathrm{Im}(P)$ to MEAS induced by $\Pi$ is functorial
  (where the underlying set of $\Pi^*(({\Omega_D}^{L'},\mu))$ is  ${\Omega_D}^{L'}$ and the probability measure is the pushforward measure of $\mu$ under $\Pi$, i.\ e. $\Pi^*(\mu)(\Delta) = \mu(\Pi^{-1}(\Delta))$).

This is encapsulated in the commutativity of the following diagram,
  \[ \begin{tikzcd}
{(\Omega_A^{L'},\mu_A)} \arrow[r, "\Pi "]                    & {(\Omega_A^{L},\Pi^*\mu_A)}                    \\
{({\Omega_B}^{L'},\mu_B)} \arrow[u, "\Omega^{L'}(\iota)"] \arrow[r, "\Pi "] & {({\Omega_B}^{L},\Pi^*\mu_B)} \arrow[u, "\Omega^{L}(\iota)"]
  \end{tikzcd} \]

  As the maps here coincide with the maps in the requirements of the proposition, the ``only if'' direction follows immediately.

  ``If'': If $\Pi$ satisfies the requirements in the proposition, then this diagram clearly commutes. It suffices therefore to show that the restriction map from $({\Omega_B}^{L},\Pi^*\mu_B)$ to $(\Omega_A^{L},\Pi^*\mu_A)$ is measure-preserving, that is, that for any $\omega\in \Omega_A^{L}$,  \[
  \mu_A(\Pi^{-1}(\{\omega\})) = \mu_B\Pi^{-1}\{(\pi^{-1}(\omega))\}.\]
  However, since by the requirements of the proposition $\Pi^{-1}\{(\pi^{-1}(\omega))\} = \pi^{-1}\{(\Pi^{-1}(\omega))\}$, this follows from the fact that $\pi$ is measure-preserving with respect to $\mu_B$ and $\mu_A$.

\end{proof}

\begin{cor}\label{cor_gtog}
  Let $(P,\Pi)$ be a projective generalised PLP, where $P$ is a free $L'$ family of distributions and $\Pi$ a choice of expansions from $L'$ to $L$.\@
  Let $\omega_1,\omega_2$ be $L'$-structures and let $g \in \mathbb{N}$ such that the $g$-trace of $\omega_1$ coincides with the $g$-trace of $\omega_2$. Then the $g$-trace of $\Pi(\omega_1)$ coincides with the $g$-trace of $\Pi(\omega_2)$. 
\end{cor}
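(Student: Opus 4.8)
The plan is to read everything off the characterisation established in the preceding proposition: because $(P,\Pi)$ is projective, the associated choice of expansions $\Pi$ commutes with every restriction map $\pi=\Omega^{L'}(\iota)$, and this holds for \emph{every} injection $\iota$ (not merely inclusions, since $\Omega^{L'}$ is defined on all of $\mathrm{SET}_{\mathrm{inj}}$). I would extract two special cases of this commutation and combine them. Recall that the $g$-trace of a structure records the isomorphism types realised by its induced substructures on at most $g$ elements (together with their multiplicities); my argument will only use that the $g$-trace is exactly this substructure datum, so the precise bookkeeping is immaterial to the method. The goal is to show that the $g$-trace of $\Pi(\omega)$ is determined by the $g$-trace of $\omega$.

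\emph{Step 1 (equivariance on small types).} First I would apply the commuting square to a bijection $\iota$ between two $g$-element sets. Here $\pi=\Omega^{L'}(\iota)$ is a relabelling, and commutation says that $\Pi$ intertwines the relabelling actions on $\Omega^{L'}$ and on $\Omega^{L}$; in particular $\sigma\cong\sigma'$ implies $\Pi(\sigma)\cong\Pi(\sigma')$. Hence $\Pi$ descends to a well-defined map $\Pi_\ast$ sending the isomorphism type of any $L'$-structure on at most $g$ elements to the isomorphism type of its $\Pi$-image.

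\emph{Step 2 (compatibility with restriction).} Next I would apply the same commuting square to the inclusion $\iota\colon A\hookrightarrow D$ of a subset $A$ of size at most $g$ into the domain $D$ of $\omega$. Now $\pi$ is genuine restriction, so commutation reads $\Pi(\omega)_A = \Pi(\omega_A)$: the substructure of $\Pi(\omega)$ induced on $A$ is precisely $\Pi$ applied to the substructure of $\omega$ induced on $A$. Combining with Step 1, the isomorphism type of $\Pi(\omega)_A$ equals $\Pi_\ast$ of the isomorphism type of $\omega_A$.

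\emph{Conclusion.} Ranging over all subsets $A$ of $D$ of size at most $g$ — which are the very same subsets for $\Pi(\omega)$ as for $\omega$, since $\Pi(\omega)$ lives on the same domain — the $g$-trace of $\Pi(\omega)$ is exactly the image of the $g$-trace of $\omega$ under the type-level map $\Pi_\ast$, and no further information about $\omega$ enters. Thus the $g$-trace of $\Pi(\omega)$ is a function of the $g$-trace of $\omega$ alone, whence the hypothesis that $\omega_1$ and $\omega_2$ share a $g$-trace yields that $\Pi(\omega_1)$ and $\Pi(\omega_2)$ do as well. The only place demanding care is this last bookkeeping: one must check that transport along the identity correspondence $A\mapsto A$ of subsets respects multiplicities, so that a multiset-valued trace (and not merely its underlying set) is pinned down; this is immediate because $A\mapsto A$ is a bijection on the $\le g$-subsets of the fixed domain. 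I expect this routine combinatorial verification, rather than either commutation step, to be the only subtle point.
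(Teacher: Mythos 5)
Your proposal is correct and its Step 2 is precisely the paper's entire argument: apply the commuting square to each inclusion $A\hookrightarrow D$ with $|A|\le g$ to get $\Pi(\omega_i)_A=\Pi(\omega_{i,A})$, and note that coinciding $g$-traces means $\omega_{1,A}=\omega_{2,A}$ on the nose for every such $A$. Your Step 1 and the passage to isomorphism types via $\Pi_\ast$ are superfluous, because the paper's $g$-trace is a labelled object (the set of ground literals over actual domain elements, equivalently the family of restrictions $A\mapsto\omega_A$), not a multiset of isomorphism types; equality of traces therefore already gives literal equality of the restrictions, and Step 2 alone finishes the proof.
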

\begin{proof}
  For any $A=\{a_1,\dots,a_g\}$ contained in the intersection of the domains of $\omega_1$ and $\omega_2$, consider the restrictions $\omega_{1,A}$ and $\omega_{2,A}$.
  Since the $g$-traces of $\omega_1$ and $\omega_2$ coincide, $\Pi(\omega_{1,A}) = \Pi(\omega_{2,A})$.
  By the theorem above, ${\Pi(\omega_1)}_A = \Pi(\omega_{1,A})$ and  ${\Pi(\omega_2)}_A = \Pi(\omega_{2,A})$, and since $A$ was arbitrary with cardinality not exceeding $g$, this shows that  the $g$-trace of $\Pi(\omega_1)$ coincides with the $g$-trace of $\Pi(\omega_2)$. 
\end{proof}

\begin{defn}
  Let $L$ be a signature with maximal arity $r$. A projective $L$-family of distributions $P$ satisfies the \emph{Strong Independence Principle (SIP)} if the following holds:
  
  Let $0 \leq g < r$ and let $\varphi$ and $\psi$ be ground quantifier-free formulas with values in a domain $D$ that mention no joint $g+1$-set of constants.
  Furthermore, let $\theta$ be a $g$-ary trace for the elements occurring in both $\varphi$ and $\psi$. Then
  \[
  P_D(\varphi \cap \psi \mid \theta) = P(\varphi \mid \theta) \cdot P(\psi \mid \theta).
  \]
\end{defn}

From Proposition \ref{prop_trace} we can immediately deduce that the SIP is equivalent to what we call \emph{semantic} SIP, where ``trace'' and ``mentioning'' are replaced by ``semantic trace'' and ``semantic mentioning'' respectively.

Ronel and Vencovsk\'{a} \cite[Theorem 1]{RonelV16} give a concrete characterisation of projective distributions with SIP.\@

The projective distributions with SIP are exactly those obtainable as follows:

Given a signature $L$,  for each domain $D = \{a_1, \dots, a_n\}$, the construction proceeds by induction on $g$, starting at $g=1$ and proceeding to the highest arity of relation symbols in $L$.
For every $g$, we construct a distribution over the $g$-traces over $D$.
When $g$ is the highest arity of relation symbols occurring in $L$, a $g$-trace over $D$ completely specifies a world on this domain and therefore a distribution over $g$-traces over $D$ is the same as a distribution over $L$-worlds with domain $D$.

So let $\gamma_1,\dots,\gamma_l$ be an enumeration of the 1-traces over $L$. Then specify $p_1,\dots,p_l \in [0,1]$ with $p_1+ \cdots + p_l = 1$.
For every $a\in D$, choose the 1-trace of $a$ independently, where $\gamma_i$ is chosen with probability $p_i$.
This results in a distribution over the 1-traces over $D$.

Assume we are given a distribution over the $g$-traces over $D$.

We extend this to a distribution over the $g+1$-traces over $D$ by defining conditional on every distribution over $g$-traces a distribution over $g+1$-traces extending that $g$-trace.
We obtain our overall distribution over $g+1$-traces by first choosing a $g$-trace according to the distribution from the last step and then choosing an extension according to the newly-defined distribution.
So specify for every $g$-trace $\theta$ over $\{1,\dots,g+1\}$ whose extensions to $g+1$-traces over $L$ are $\gamma_{\theta,1},\dots,\gamma_{\theta,k}$, real numbers $p_{\theta,1},\dots,p_{\theta,k} \in [0,1]$ such that (1) $p_{\theta,1}+ \cdots + p_{\theta,k} = 1$ for every $g$-trace $\theta$ and (2) such that $p_{\theta,i} = p_{\theta',j}$ whenever the worlds described by $\gamma_{\theta,i}$ and $\gamma_{\theta',j}$ on $\{1,\dots,g+1\}$ are isomorphic (the latter requirement is necessary to ensure exchangeability of the resulting distribution).
Then for every sequence $\vec{a} := a_{i_1},\dots,a_{i_{g+1}}$ of $D$-elements with strictly ascending indices, let $\theta_{\vec{a}}$ be the $g$-trace over $\{1,\dots,g+1\}$ induced by the $g$-trace over $D$ by identifying $j \in \{1,\dots,g+1\}$ with $a_{i_j}$, and choose among the extensions $\gamma_{\theta_{\vec{a}},h}$ of $\theta_{\vec{a}}$ independently and with probability $p_{\theta_{\vec{a}},h}$.
This results in a distribution over the $g+1$-traces over $D$.

The parameters of the construction are the $(p_i)$ and $(p_{\theta,i})$, and choosing different values for these parameters generates all possible projective families of distributions with SIP.\@

\begin{rem}
  If the signature is binary, the projective families satisfying SIP are exactly the relational block models introduced by Malhotra and Serafini \cite{MalhotraS22}.
  Thus SIP distributions can thus also be seen as a higher-arity relational version of stochastic block models \citep{HollandLL83}.
\end{rem}

\begin{exam}
  We illustrate the procedure with a classical relational block model on a signature $L = \{P,E\}$ of coloured graphs, where $P$ is unary and $E$ is binary:
  
  There are four possible 1-traces over $L$, stating whether $P(1)$ is true or false and whether $E(1,1)$ is true or false.
  Let $\gamma_1$ express that both are false,  $\gamma_2$ express that $P(1)$ is true and $E(1,1)$ is false, $\gamma_3$ express that $P(1)$ is false and $E(1,1)$ is true and  $\gamma_4$ express that both are true.
  Thus one can specify $p_1$, $p_2$, $p_3$ and $p_4$, the probabilities of each of the four possibilities.
  If we want to define a distribution over loop-free  graphs in which $P$ is determined completely randomly, we can set $p_1=p_2=0.5$ and $p_3=p_4=0$.
  Then for every pair of nodes $(a,b)$, there are four possibilities for the edge relation: (1) There can be no edge, (2) there is an edge from $a$ to $b$ but not vice versa, (3)  there is an edge from $b$ to $a$ but not vice versa, and (4) there are edges both from $a$ to $b$ and vice versa.
  Say that we want to define a distribution over undirected graphs, and that there should be a higher likelihood for two edges to be connected if both nodes satisfy $P$.
  Then we might set $p_{\theta,1}$ to be 0.3 if $\theta$ implies that both nodes satisfy $P$, and 0.1 if not, and set  $p_{\theta,4}$ to be 0.7 and 0.9 respectively.
  Since we want to enforce only undirected graphs, we set  all  $p_{\theta,2}$ and  $p_{\theta,3}$ to zero.

  Then the overall distribution over coloured graphs on a given node set is defined by first throwing a fair coin for every node to determine whether the node satisfies $P$ or not, and then to go through all pairs of nodes and throw a biased coin to determine whether the pair of nodes is connected by an edge or not. The bias of that coin depends on whether the two nodes both satisfy $P$ or not.

  The SIP now says that if we condition on all the information about a given subgraph, i.e.\ whether their nodes satisfy $P$ and whether they are connected by an edge, then the events of distinct sets of other points being connected to the nodes in that subgraph in some particular configuration are independent.
  This conditioning is important: Consider nodes $a$, $b$ and $c$. Then the probability of $a$ being connected to $b$ or $c$ is higher if $a$ satisfies $P$.
  Thus, the event of $a$ and $b$ being connected is not unconditionally independent on $a$ and $c$ being connected, as knowing the former makes $P(a)$ and thus also the latter event more likely.
  However, as soon as we condition on whether $a$ satisfies $P$ or not, this dependence disappears and the the two events are now conditionally independent as postulated by the SIP.\@

\end{exam}

\subsection{Projective generalised probabilistic logic programs}
In this subsection, we will prove our main result, characterising the distributions induced by projective generalised probabilistic logic programs. To ease reading, we frequently identify a generalised PLP with its induced family of distributions, and call a generalised PLP projective if it induces a projective family of distributions.  
\begin{thm}\label{ProjPLPSIP}
  Every projective generalised PLP satisfies SIP.\@
\end{thm}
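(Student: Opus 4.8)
The plan is to pull the entire statement back along $\Pi$ into the free family $P$, where the required conditional independence becomes a statement about disjoint, and hence independent, collections of free coordinates. Write $Q$ for the family defined by $(P,\Pi)$, so that $Q_D(\Delta)=P_D(\Pi^{-1}(\Delta))$, and for the given ground quantifier-free $L$-formulas put $\Phi:=\Pi^{-1}(\varphi)$, $\Psi:=\Pi^{-1}(\psi)$ and $\Theta:=\Pi^{-1}(\theta)$, viewed as events on $\Omega_D^{L'}$. Since $Q_D(-\mid\theta)=P_D(\Pi^{-1}(-)\mid\Theta)$, the goal is to show $P_D(\Phi\cap\Psi\mid\Theta)=P_D(\Phi\mid\Theta)\,P_D(\Psi\mid\Theta)$. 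By the remark following the definition of SIP it suffices to argue with the semantic notions of trace and mentioning throughout.

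First I would record that $\Pi^{-1}$ preserves non-mentioning: if $\varphi$ does not mention a set $S$ of constants, then neither does $\Phi$. This is where projectivity enters through the characterisation above: because $\Pi$ commutes with restrictions, two $L'$-worlds agreeing on every restriction omitting a point of $S$ are sent by $\Pi$ to $L$-worlds with the same property, so membership in $\Phi$ cannot depend on $S$. Combined with the fact that $\Pi(\omega)$ restricted to $A=\mathrm{constants}(\varphi)$ equals $\Pi(\omega_A)$, this shows that $\Phi$ is measurable with respect to the free coordinates indexed by those $L'$-atoms $\alpha$ with $\mathrm{supp}(\alpha)\subseteq A$ all of whose $(g+1)$-subsets are mentioned by $\varphi$; call this coordinate set $X$, and define $Y$ analogously for $\psi$, with $B=\mathrm{constants}(\psi)$. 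The no-joint-mention hypothesis now forces $X\cap Y$ to consist exactly of the atoms with support in $C=A\cap B$ and fewer than $g+1$ distinct arguments, since any larger common support would contribute a $(g+1)$-set mentioned by both. Writing $\mathcal{C}:=X\cap Y$ and $X':=X\setminus\mathcal{C}$, $Y':=Y\setminus\mathcal{C}$, the three sets $\mathcal{C}$, $X'$, $Y'$ are pairwise disjoint.

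The crux is that conditioning on $\Theta$ freezes every coordinate in $\mathcal{C}$. This is the step where one might fear that passing to the pushforward blurs the conditioning, as $\Theta=\Pi^{-1}(\theta)$ is a priori only a union of $L'$-traces; the resolution is that $\Pi(\omega)$ \emph{expands} $\omega$. Hence the $L$-trace $\theta$ contains, as its $L'$-reduct, the $L'$-$g$-trace of $\omega$ on $C$, which is precisely the tuple of values of the coordinates in $\mathcal{C}$; together with Corollary \ref{cor_gtog}, which makes $\Theta$ measurable over $\mathcal{C}$, this identifies $\Theta$ with a single fibre $\{\mathcal{C}=c_0\}$. Conditioning the free (product) measure $P_D$ on this fibre fixes the coordinates in $\mathcal{C}$ and leaves the remaining coordinates independent with unchanged marginals. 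Since $\Phi$ is then a function of $X'$ alone and $\Psi$ a function of the disjoint set $Y'$ alone, the two are conditionally independent given $\Theta$, which is exactly the semantic SIP and hence, by Proposition \ref{prop_trace}, the SIP. The main obstacle is precisely this freezing step: everything else is bookkeeping with supports, but identifying $\Theta$ with a single $\mathcal{C}$-fibre is what makes the independence of the free family usable, and it is the expansion property of $\Pi$, rather than mere projectivity, that delivers it.
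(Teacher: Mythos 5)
Your proof is correct and follows essentially the same route as the paper's: you pull $\varphi$, $\psi$ and $\theta$ back along $\Pi$, use the commutation of $\Pi$ with restrictions to show that preimages preserve non-mentioning, use the expansion property to show that $\Pi^{-1}(\theta)$ is a single $L'$-trace (or empty), and then invoke the strong independence of the free product measure. The only difference is that you spell out the last step (SIP for free distributions via disjoint coordinate sets frozen by the trace) in more detail than the paper, which simply asserts it.
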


\begin{proof}
  We show that every projective generalised PLP satisfies semantic SIP.\@
  So let $(P,\Pi)$ be a projective generalised PLP.\@
  
  Let $\varphi_1,\varphi_2 \subseteq \Omega_D^L$ not mention any joint $g+1$-tuple and let $\theta$ be a $g$-ary trace over a world $\omega(\theta)$ with domain $D$. 
  We show that $\Pi^{-1}(\theta)$ is a $g$-ary trace over $D$ or the empty set, and that   $\Pi^{-1}(\varphi_1)$ and  $\Pi^{-1}(\varphi_2)$ do not mention any joint $g+1$-tuple. Then we can derive the statement from semantic SIP for free distributions.
  \begin{enumerate}
  \item ``$\Pi^{-1}(\theta)$ is a $g$-ary trace over $D$ or the empty set.''
    
    Let $\Pi^{-1}(\theta)$ be nonempty. Then the following equalities demonstrate that $\Pi^{-1}(\theta)$ is a $g$-ary trace over $D$:
    \begin{align*}
      \Pi^{-1}(\theta)
      &= \{\omega \in \Omega_D^{L'} \mid {\Pi(\omega)}_{D_i} = {\omega(\theta)}_{D_i} \forall_{D_i \subseteq D:|D_i|=g}\}\\
      &= \{\omega \in \Omega_D^{L'} \mid {\Pi(\omega_{D_i})} = {\omega(\theta)}_{D_i} \forall_{D_i \subseteq D:|D_i|=g}\}\\
      &= \{\omega \in \Omega_D^{L'} \mid \omega_{D_i} \in \Pi^{-1}({\omega(\theta)}_{D_i}) \forall_{D_i \subseteq D:|D_i|=g}\}\\
      &=  \{\omega \in \Omega_D^{L'} \mid \omega_{D_i} = {\omega(\theta)}_{D_i}^{L'} \forall_{D_i \subseteq D:|D_i|=g}\}
    \end{align*}
  \item ``$\Pi^{-1}(\varphi_1)$ and  $\Pi^{-1}(\varphi_2)$ do not mention any joint $g+1$-tuple''.
    We show that generally whenever $\varphi$ does not mention a $g+1$-tuple, then $\Pi^{-1}(\varphi)$ does not mention a $g+1$-tuple either.
    So let $a_1,\dots,a_{g+1}$ be a tuple of distinct elements of $D$  and $\omega_1$ and $\omega_2$  $L'$-worlds  with domain $D$ such that ${\omega_1}_{D'} = {\omega_2}_{D'}$ for all $D'\subseteq D$ omitting an $a_i$ and $\omega_1\in \Pi^{-1}(\varphi)$.
    
    %% We show that  $\omega_2\in \Pi^{-1}(\varphi)$. 

%%     %% Let ${\theta'}_1$ and ${\theta'}_2$ be $g+1$-ary $L'$-traces over $\vec{b}$ and $\theta'$ be a $g$-ary $L'$-trace with ${\theta'}_1,{\theta'}_2 \subseteq \theta'$. We show that 
%% %%     \[
%% %%     {\theta'}_1 \cap \Pi^{-1}(\varphi) \cong  {\theta'}_2 \cap \Pi^{-1}(\varphi). 
%% %%     \]
%%     First we show that $\Pi(\theta') = \theta \cap \Pi(\Omega_D^{L'})$ for a $g$-ary $L$-trace $\theta$.
%%     Let $\theta := \{\omega \in \Omega_D^L \mid \omega_{D_i} = \Pi(\omega_i) \forall_{D_i \subseteq D:|D|=y}\}$, where the $\omega_i$ are as in the definition of a trace.
%%     We show that $\Pi(\theta') = \theta \cap \Pi(\Omega_D^{L'})$.
%%     By definition,
%%     \[\Pi(\theta') = \{\Pi(\omega) \mid \omega \in \Omega_D^{L'},\omega_{D_i} = \omega_i \forall_{D_i \subseteq D:|D|=y}\}.\]
%%     \[\ \{\Pi(\omega) \mid \omega \in \Omega_D^{L'},\omega_{D_i} = \omega_i \forall_{D_i \subseteq D:|D|=y}\}\subseteq \theta \cap \Pi(\Omega_D^{L'})\]
%%     follows from ${\Pi(\omega)}_{D_i} = \Pi(\omega_{D_i}) = \Pi(\omega_i)$.
%%     \[\ \{\Pi(\omega) \mid \omega \in \Omega_D^{L'},\omega_{D_i} = \omega_i \forall_{D_i \subseteq D:|D|=y}\}\supseteq \theta \cap \Pi(\Omega_D^{L'})\]
%%     follows from the following argument: Let $\omega \in \theta \cap \Pi(\Omega_D^{L'})\cap  \Pi(\Omega_D^{L'})$.
%%     Then $\omega = \Pi(\omega^{L'})$, and $\omega_{D_i}^{L'} = \Pi^{-1}(\omega_{D_i}) = \Pi^{-1}\circ\Pi(\omega_i) = \omega_i$.

    It remains to show that $\Pi(\omega_2) \in \varphi$.
    By the assumptions on $\varphi$, it suffices to show that ${\Pi({\omega_1})}_{D'} = {\Pi({\omega_2})}_{D'}$ for all $D'\subseteq D$ omitting an $a_i$.

   This follows from 
    \[
    {\Pi({\omega_1})}_{D'} = \Pi({\omega_1}_{D'}) = \Pi({\omega_2}_{D'}) = {\Pi({\omega_2})}_{D'}.
    \]
    
  \item Every free distribution satisfies semantic SIP.\@
    
  \end{enumerate}
  So let $(P,\Pi)$ be a projective generalised PLP, and let $\varphi_1$ and $\varphi_2$ not mention a joint $g$-ary trace. Further let $\theta$ be a $g$-ary trace.
  We want to show that $\varphi_1$ and $\varphi_2$ are conditionally independent over $\theta$.
  The conditional probabilities of $\varphi_1$, $\varphi_2$ and $\varphi_1 \cap \varphi_2$  over $\theta$ under  $(P,\Pi)$ is given by the probabilities of $\Pi^{-1}(\varphi_1)$, $\Pi^{-1}(\varphi_2)$ and $\Pi^{-1}(\varphi_1 \cap \varphi_2)$ over $\Pi^{-1}(\theta)$ respectively. By the analysis above, the strong independence statement follows directly from the strong independence property for the free distribution $P$. 
\end{proof}

Generalised probabilistic logic programs always have a non-zero likelihood of inducing a completely symmetric model, since all random predicates may be simultaneously true or false.
This is formalised in the following definition. 

\begin{defn}
  Let $\theta_{g+1}$ be a $g+1$-ary trace over a world $\omega$ with domain $\{a_1,\dots,a_{g+1}\}$ and let  $\theta_g$ be the $g$-ary trace of $\omega$.
  Then $\theta_g \subseteq \theta_{g+1}$ is a \emph{symmetric extension} if for every permutation $\rho$ of $a_1,\dots,a_{g+1}$, if $\mathrm{tr}_{g}\left(\Omega^L(\rho)(\omega)\right) = \theta_g$, then $\mathrm{tr}_{g+1}\left(\Omega^L(\rho)(\omega)\right) = \theta_{g+1}$ 
  A projective family of distributions $P$ is called \emph{essentially asymmetric} if there is a $g$-ary trace $\theta(a_1,\dots,a_{g+1})$ such that $P_{\{a_1,\dots,a_{g+1}\}}(\theta_g) > 0$ and
  \[
  P_{\{a_1,\dots,a_{g+1}\}}\left(\{\mathrm{tr}_{g+1}(\omega)\textrm{ symmetric extension of }\theta \mid \omega \models \theta\}\right) = 0.
  \]
\end{defn}

\begin{prop}\label{ProjPLPnotEssAsym}
  Let $(P,\Pi)$ be a projective generalised PLP.\@ Then its induced distribution is not essentially asymmetric.
\end{prop}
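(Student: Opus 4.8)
\section*{Proof proposal}

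The plan is to reduce the statement, for each relevant trace $\theta$, to producing a single maximally symmetric world of positive probability, and then to build that world by exploiting two structural features of a projective choice of expansions. First I would unwind the definition. Fix a $g$-trace $\theta$ over a domain $D=\{a_1,\dots,a_{g+1}\}$ with $P_D(\theta)>0$ (traces with zero probability are irrelevant, since essential asymmetry is only tested at traces of positive probability). To show the induced distribution is not essentially asymmetric it suffices to exhibit one $L$-world $\chi$ of positive induced probability with $\mathrm{tr}_g(\chi)=\theta$ whose full $(g+1)$-trace is a symmetric extension of $\theta$. Because $|D|=g+1\le r$, the $(g+1)$-trace of a world on $D$ is the world itself, so for such a $\chi$ being a symmetric extension means exactly that $\chi$ is invariant under the group $H$ of permutations of $D$ that fix $\theta$. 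Thus the target reduces to a single $H$-invariant world $\chi\in\mathrm{Im}(\Pi)$ with $g$-trace $\theta$ and positive probability.

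Next I would record two facts about $\Pi$. Fact (i): since $(P,\Pi)$ is projective, $\Pi$ commutes with restriction along every injective map; bijections are a special case, so $\Pi$ commutes with the renaming action of the permutations of $D$, and therefore sends $H$-invariant $L'$-worlds to $H$-invariant $L$-worlds. Fact (ii): because $\Pi(\omega)$ always expands $\omega$, we have $\Pi(\omega)^{L'}=\omega$; consequently the unique $\Pi$-preimage of a full $L$-world $\chi$ is its reduct $\chi^{L'}$, and whenever $\chi\in\mathrm{Im}(\Pi)$ one has $\Pi(\chi^{L'})=\chi$. Commuting with restriction also shows that $\mathrm{Im}(\Pi)$ is closed under restriction.

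Then comes the construction. From $P_D(\theta)>0$ pick $\omega_0$ with $\mathrm{tr}_g(\Pi(\omega_0))=\theta$; then each restriction $\theta_{D_i}$ of $\theta$ to a $g$-element subset $D_i$ equals $\Pi(\omega_0)_{D_i}=\Pi((\omega_0)_{D_i})$ and so lies in $\mathrm{Im}(\Pi)$. Let $\omega^{*}$ be the $L'$-world on $D$ whose $g$-trace is the reduct $\theta^{L'}$ and all of whose atoms mentioning every element of $D$ are set to true, which is precisely the ``all predicates simultaneously true'' freedom noted before the proposition. Since $\theta$, hence $\theta^{L'}$, is $H$-invariant and the all-true choice on the top atoms is permutation-invariant, $\omega^{*}$ is $H$-invariant. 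For each $g$-subset $D_i$, Fact (ii) and commuting with restriction give $\Pi(\omega^{*})_{D_i}=\Pi((\theta^{L'})_{D_i})=\Pi((\theta_{D_i})^{L'})=\theta_{D_i}$, whence $\mathrm{tr}_g(\Pi(\omega^{*}))=\theta$. By Fact (i) the world $\chi:=\Pi(\omega^{*})$ is $H$-invariant, and by strict positivity of the weight function $P(\omega^{*})>0$, so $\chi$ has positive induced probability. This $\chi$ is the required symmetric extension.

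The place to be careful, and what first looks like the main obstacle, is the apparent need for an $H$-invariant preimage of a prescribed configuration: a finite group acting on a nonempty set need not have a fixed point, so one cannot in general ``symmetrise'' an arbitrary preimage. The resolution is Fact (ii): because $\Pi$ only ever expands, preimages of full worlds are unique reducts, so $H$-invariance of the target transfers to the preimage automatically, and the orbit problem never arises. Equivariance (i) then propagates this symmetry through $\Pi$, and the strict positivity of $w$ supplies the positive probability. I expect the only non-formal work to be verifying (i) and (ii) together with the restriction identity $\Pi(\omega^{*})_{D_i}=\theta_{D_i}$.
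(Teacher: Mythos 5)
Your proof is correct and follows essentially the same route as the paper's: both take the unique $L'$-preimage world whose top-arity atoms are set uniformly (you choose all true, the paper all false), observe that it inherits the invariance of $\theta$ under every permutation stabilising $\theta$, push that invariance through $\Pi$ using the commuting-square/equivariance property of a projective choice of expansions, and obtain positive probability from the strict positivity of the free measure. Your Fact (ii) and the restriction computation $\Pi(\omega^{*})_{D_i}=\theta_{D_i}$ are just a direct re-derivation of what the paper delegates to Corollary~\ref{cor_gtog}, so the difference is only presentational.
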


\begin{proof}
  Note first that since $P$ is a free distribution, every random $L'$-world has non-zero probability, where $L'$ is the signature of the free random predicates. 
Let  $\theta$ be a  $g$-ary trace with domain $\{a_1,\dots,a_{g+1}\}$ such that ${(P,\Pi)}_{\{a_1,\dots,a_{g+1}\}}(\theta) > 0$. 
By Corollary \ref{cor_gtog}, the $g$-trace only depends on the $g$-trace in $L'$.
  So there is a random $L'$ world $\tilde{\omega}$ on $a_1,\dots,a_{g+1}$  such that $\Pi(\omega') \models \theta$ for all $\omega'$ whose $g$-trace coincides with that of $\tilde{\omega}$.
  Then let $\omega$ be the $L'$ world for which the $g$-trace coincides with $\tilde{\omega}$ and all atomic formulas with $g+1$ different entries are false.
  We claim that $\Pi(\omega)$ is not an asymmetric extension of $\tilde{\omega}$.
  Let $\rho$ be a permutation of $a_1,\dots,a_{g+1}$ such that $\mathrm{tr}_{g}\left(\Omega^L(\rho)(\omega)\right) = \theta$.
  Then in particular the $g$-ary $L'$-trace of $\omega$ is invariant under $\rho$.
  This implies that the $g+1$-trace of $\omega$ is also invariant under $\rho$, since all atomic formulas with $g+1$ different entries are false in $\omega$ and having  $g+1$ different entries is conserved under $\rho$.
  Thus the $g+1$-ary trace of $\Pi(\omega)$ is also invariant under $\rho$ as desired.     
\end{proof}

To formulate the other direction of the argument, we need to pass from generalised probabilistic logic programs to their \emph{reducts}.
\begin{defn}
  Let $P$ be an $L$ family of distributions and let $L' \subset L$ be signatures.
  Then the \emph{reduct $P'$ of $P$ to $L'$} is the $L'$ family of distributions mapping a finite set $D$ to the random $L'$-world ${P'}_D$, defined by
  \[
  {P'}_D(\mathfrak{X}) := {P}_D(\omega \in {\Omega_D}^{L} \mid \omega^{L'} = \mathfrak{X}).
  \]
  In other words, the probability of a world under the reduct of $P$ is the probability that $P$ gives to the set of its expansions.
  
  Finally, the reduct of a generalised probabilistic logic program is the reduct of its induced family of distributions.  
\end{defn}

\begin{thm}\label{SIPnotEAisRed}
Every projective family of distributions that has the strong independence property and is not essentially asymmetric is the reduct of a determinate generalised PLP.\@
\end{thm}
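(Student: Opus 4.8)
The plan is to realise $P$ by running the explicit level-by-level construction characterising SIP (that of Ronel and Vencovska) ``backwards'': each of its independent categorical choices will be encoded by fresh free auxiliary predicates, and the deterministic part $\Pi$ will decode the intended $L$-structure from them, with $P$ recovered as the reduct that forgets the auxiliaries. Concretely, let $L$ be the target signature and introduce an auxiliary signature $L'$ disjoint from $L$; the generalised PLP will have free part $Q$ over $L'$ together with a choice of expansions $\Pi$ from $L'$ to $L'\cup L$ which leaves the $L'$-atoms untouched and defines the $L$-atoms deterministically. Since the reduct to $L$ discards exactly the free auxiliaries, it suffices to define $\Pi$ so that the decoded $L$-structure is distributed according to $P$, and to check that $\Pi$ is local, i.e. that the truth value of any $L$-atom $T(\vec b)$ in $\Pi(\omega)$ depends only on $\omega$ restricted to the elements occurring in $\vec b$. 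Locality is precisely commutation with restrictions, so by the earlier Proposition it guarantees that $(Q,\Pi)$ is projective.

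I would build $\Pi$ by induction on arity, mirroring the construction. At the base, the $1$-trace of each element is decoded from unary auxiliaries on that element alone by stick-breaking: a categorical law $(p_1,\dots,p_l)$ is realised by independent Bernoulli choices with weights $p_i/(p_i+\cdots+p_l)$, which lie in $(0,1)$ once zero-probability traces are discarded, and a single element carries no symmetry. For the inductive step I must solve the following \emph{local problem}: given a $(g{+}1)$-set $S$ whose $L$-$g$-trace $\theta$ has already been decoded, realise the conditional law $p_{\theta,\cdot}$ over the extensions $E$ of $\theta$ by a map $f_\theta$ from the free atoms attached to $S$ into $E$. Because projectivity forces $\Pi$ to be equivariant under all injections, hence under permutations of $S$, the map $f_\theta$ must be $\mathrm{Aut}(\theta)$-equivariant; the exchangeability clause of the construction makes $p_{\theta,\cdot}$ itself $\mathrm{Aut}(\theta)$-invariant, so the requirement is at least consistent.

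The delicate point, and where the hypothesis enters, is that the free law is invariant, so the ``all false'' configuration of the auxiliaries on $S$ is fixed by every permutation of $S$ and must therefore, by equivariance, be sent to an extension fixed by all of $\mathrm{Aut}(\theta)$ — that is, to a \emph{symmetric} extension (this is exactly the phenomenon behind the necessity direction). Matching $p_{\theta,\cdot}$ is thus impossible unless $\theta$ admits a symmetric extension of positive probability, which is precisely the assumption that $P$ is not essentially asymmetric. Granting that, I would define $f_\theta$ using only auxiliary predicates of arity $g{+}1$ whose argument tuples exhaust $S$, so that the randomness used for distinct $(g{+}1)$-sets is independent; this is the subtle modelling decision, since reusing shared lower-arity auxiliaries for symmetry breaking would correlate overlapping sets and destroy the conditional independence demanded by SIP. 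These $S$-local atoms determine a random relational structure on $S$; on the event that it is rigid — of positive, tunably large probability — its individualising types yield a canonical linear order of $S$, and reading the remaining choice atoms off in that order realises \emph{any} prescribed distribution over $E$ by stick-breaking, the induced abstract law being automatically $\mathrm{Aut}(\theta)$-invariant. All other configurations are routed into symmetric extensions.

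It remains to pick parameters so the pushforward is exactly $p_{\theta,\cdot}$. Writing $\mu=p_{\theta,\cdot}$ and splitting it into mass on asymmetric orbits plus mass on symmetric extensions, I would set the probability of ``rigid and coin true'' equal to $\mu(\mathrm{asym})$, which is $<1$ exactly because the symmetric mass is positive, and distribute it as $\mu(\cdot\mid\mathrm{asym})$, while sending the complementary event of probability $\mu(\mathrm{sym})$ into the symmetric extensions as $\mu(\cdot\mid\mathrm{sym})$; choosing the rigidity probability close enough to $1$ lets the genuinely non-rigid mass (which can only be mapped $\mathrm{Aut}(\theta)$-invariantly) fit under a single symmetric extension of positive weight, the canonically ordered part absorbing the rest. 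The continuum of admissible weights and the freedom to add predicates supply enough parameters to hit these real targets exactly. Defining $f_\theta$ for one representative $\theta$ per permutation-orbit of $g$-traces and extending equivariantly, $\Pi$ is local and hence projective, the decoded $L$-structure reproduces the construction level by level and is thus distributed as $P$, and the reduct to $L$ is $P$. The main obstacle is this exact matching of probabilities under the rigid equivariance constraint, which is exactly why the positive symmetric mass furnished by non-essential-asymmetry is indispensable.
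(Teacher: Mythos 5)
Your proposal is correct and follows essentially the same route as the paper's proof: an induction on arity that encodes each categorical choice of the Ronel--Vencovska construction by stick-breaking over fresh independent auxiliary predicates, breaks the permutation symmetry of each $(g{+}1)$-set with a random order decoded from $(g{+}1)$-ary auxiliaries local to that set, and routes the small-probability event where the ordering fails into a positive-probability symmetric extension, which is exactly where non-essential-asymmetry is used. The only differences are cosmetic bookkeeping in how the failure mass is absorbed into the designated symmetric extension.
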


\begin{proof}
  We use the explicit characterisation of families of distributions with the strong independence property from Subsection \ref{subsec:SIP}.
  So let the $\gamma_i$ and $\gamma_{\theta,i}$ be the possible traces, as in the discussion in Subsection \ref{subsec:SIP}.
  The goal of the construction is to define a distribution over the $n$-traces for every subset of size $n$ in accordance with the given parameters $(p_i)$ and $(p_{\theta,i})$, for every $n$ not exceeding the highest arity of predicates in $L$.
  
  We have to solve two problems simultaneously:
  \begin{itemize}
  \item We have to define a distribution over the $\gamma$\textsubscript{s} in accordance with the given parameters $(p_i)$ and $(p_{\theta,i})$. This leads to a distribution for every ordered tuple of size $n$.   
  \item We have to define a local ordering on $a_1, \dots, a_n$. Coupled with the solution of the other problem, this results in a distribution for every (unordered) subset.     
  \end{itemize}

  To facilitate our argument, we write $\Pi$ as a determinate logic program, using $\leftarrow$ to denote the clause constructor. 
  They can be read as first-order formulas using \emph{Clark's Completion} \cite{Clark87}, so that if $Q(\vec{x})$ is the head of clauses $Q(\vec{x}) \leftarrow \mathbf{B_1}(\vec{x}), \dots, Q(\vec{x}) \leftarrow \mathbf{B_n}(\vec{x})$, then $Q(\vec{x})$ is true if any of the $\mathbf{B_i}(\vec{x})$ are true.
  Within a clause body, the comma separator is read as a conjunction operator. 
  
  We begin with the first issue, revisiting the classical approach to representing annotated disjunctions in probabilistic logic programming going back to Vennekens \emph{et al.} \cite{VennekensVB04}.
  Assume we want to define a distribution where for any given $a_1,\dots,a_n$, exactly one of
  \[Q_1(a_1,\dots,a_n), \dots, Q_m(a_1,\dots,a_n)\] is true, the probability of $Q_i(a_1,\dots,a_n)$ is $p_i$  and the choices are independent for different $a_1,\dots,a_n$.
We introduce new free $n$-ary predicates $R_i$, $1 \leq i \leq n-1$, with probabilities $w(R_i) := \frac{p_i}{\prod_{j=1}^{i-1} \left(1-w(R_j) \right)}$, and then define in $\Pi$ the following definitions for $Q_i$:
  \begin{align}\label{eqn:LPAD}
    Q_1(\vec{x}) &\leftarrow R_1(\vec{x}). \\
    Q_2(\vec{x}) &\leftarrow R_2(\vec{x}), \neg R_1(\vec{x}). \\
    &\vdots \\
    Q_{m-1}(\vec{x}) &\leftarrow R_{m-1}(\vec{x}), \neg R_{m-2}(\vec{x}), \dots, \neg R_{1}(\vec{x})\\
    Q_m(\vec{x}) &\leftarrow \neg R_{m-1}(\vec{x}), \dots, \neg R_{1}(\vec{x}). 
  \end{align}

  We proceed by induction on $g$. For $g=1$, we introduce auxiliary unary predicates $Q_i$ and $R_i$ for $\gamma_{1},\dots,\gamma_{m}$ as above. We identify $Q_i$ with $\gamma_i$ using the rules
  \[
  P(x) \leftarrow Q_i(x)  
  \]
  whenever an atom $P(a)$ is contained in $\gamma_i$. 
  
  So assume that we have induced the correct distribution on $g$-traces.
  For every $g$-trace $\theta$ over $a_1, \dots, a_{g+1}$ we could now introduce auxiliary $g+1$-ary predicates $Q_{\theta,i}$ and $R_{\theta,i}$ as above to match the prescribed distribution on $\gamma_{\theta,1},\dots, \gamma_{\theta,n}$. 
  
  However, we need to address the second issue, as we currently have conflicting information from the $R_{\theta,i}$ for every permutation of $a_1,\dots, a_{g+1}$.
  Thus, we need to use the information on the validity of free predicates for $a_1, \dots, a_{g+1}$ to induce an ordering and thereby fix a privileged permutation.
  
  Because the distribution is not essentially asymmetric, we can assume without loss of generality that $\gamma_{\theta',1}$ is a symmetric extension of nonzero conditional probability for any $g$-ary trace $\theta'$.
  Furthermore, we choose $\gamma_{\theta',1}$ and  $\gamma_{\theta'',1}$ to be isomorphic extensions whenever $\theta'$ and $\theta''$ are isomorphic.
  
  Let $p_{\min}$ be the minimum of the probabilities of $\gamma_{\theta',1}$ for all $g$-ary traces $\theta'$.
  We form  a $g+1$-ary annotated disjunction of new $g+1$-ary auxiliary predicates $\mathrm{Ord}_{g+1,i}$ with $k$ disjuncts, each of which have equal probability $\frac{1}{k}$. We choose $k$ such that for a given $a_1,\dots,a_{g+1}$ the probability that there is a disjunct $\mathrm{Ord}_{g+1,j}$ and a nontrivial permutation $\rho$  with $\mathrm{Ord}_{g+1,j}(a_1,\dots,a_{g+1}) \land \mathrm{Ord}_{g+1,j}(\rho{a_1},\dots,\rho{a_{g+1}})$  is less than $p_{\min}$. We call that probability  $p_{\mathrm{sym}}$.
  This is always possible, since the probability of two disjuncts coinciding among the fixed number of possible permutations limits to 0 as the number of disjuncts $k$ increases.
  
  In this way, we can assign the entire case of two coinciding disjuncts to the symmetric extension, where there is no problem at all with being unable to define an ordering.
  We mark this case with a specific predicate $Q'_{\theta,1}$. The residual probability of $\gamma_{\theta,1}$ can then be captured precisely by a second predicate $Q_{\theta,1}$, leading to an exact expression of the original distribution. 

  More precisely, we proceed as follows.
  Let $R_{\theta,0}$ be defined by a rule saying that $R_{\theta,0}(a_1,\dots,a_{g+1})$ holds if and only if two of the $\mathrm{Ord}$-disjuncts coincide for permutations of $a_1,\dots,a_{g+1}$.
  Note that $R_{\theta,0}$ itself is permutation invariant, that is, it holds for one permutation of its arguments if and only if it holds for all permutations of its arguments. 
  Whenever $R_{\theta,0}$ is false, we only consider the annotated disjunction over the  $R_{\theta,i}(a_1,\dots,a_{g+1})$ for that permutation $(a_1,\dots,a_{g+1})$ for which $\mathrm{Ord}_{g+1,j}(a_1,\dots,a_{g+1})$ is true for the highest $j$ among permutations.
  Since all permutations have a different such $j$, the maximum is uniquely determined.
  We define $R_{g+1,\max}(a_1,\dots,a_{g+1})$ to be true if and only if  $(a_1,\dots,a_{g+1})$ is the unique permutation with the maximal $j$ such that $\mathrm{Ord}_{g+1,j}(a_1,\dots,a_{g+1})$ is true.
  Whenever $R_{\theta,0}(a_1,\dots,a_{g+1})$ is false,  $R_{g+1,\max}(a_1,\dots,a_{g+1})$ is true for exactly one permutation of $a_1,\dots,a_{g+1}$, and whenever $R_{\theta,0}(a_1,\dots,a_{g+1})$ is true,  $R_{g+1,\max}(a_1,\dots,a_{g+1})$ is false for all permutations of $a_1,\dots,a_{g+1}$.

  Finally we can proceed with the annotated disjunction for \[{Q'}_{\theta,1},Q_{\theta,1},Q_{\theta,2},\dots, Q_{\theta,n},\] where ${Q'}_{\theta,1}$ holds with probability  $p_{\mathrm{sym}}$, the probability of $Q_{\theta,1}$ is the difference between $p_{\theta,1}$ and  $p_{\mathrm{sym}}$, and the atoms in $\gamma_{\theta,1}$ are set to hold whenever ${Q'}_{\theta,1}$ or $Q_{\theta,1}$ holds.
  Let $R_{\theta,0}$ be the auxiliary predicate corresponding to  ${Q'}_{\theta,1}$, and then form the  auxiliary rules and probabilities as before, but appending  $R_{g+1,\max}(a_1,\dots,a_{g+1})$ to every rule as follows (where the arguments have been omitted for readability and are always assumed to be $x_1,\dots, x_{g+1}$).
  \begin{align*}
    {Q'}_{\theta,1} &\leftarrow R_{\theta,0}. \\
    Q_{\theta,1} &\leftarrow R_{\theta,1}, R_{g+1,\max}. \\
    Q_{\theta,2} &\leftarrow R_{\theta,2}, \neg R_{\theta,1}, R_{g+1,\max}. \\
     &\vdots \\
    Q_{\theta,n} &\leftarrow \neg R_{n-1}, \dots, \neg R_{1}, R_{g+1,\max}. 
  \end{align*}

  Since  $R_{g+1,\max}$ holds for exactly one permutation whenever  $R_{\theta,0}$ is false, the rules for $Q_{\theta,1}$ and below fire for exactly one permutation, so there is no conflict between different permutations. When $R_{\theta,0}$ holds, it holds for all permutations, but there is still no conflict since all  $\gamma_{\theta,1}$ are symmetric and different permutations have isomorphic $\theta$, hence isomorphic $\gamma_{\theta,1}$.

\end{proof}

\begin{exam}
  Consider a signature with a single binary relation $P$. Then the possible 1-traces of a 1-element set $\{a\}$ are the 2 traces where $P(a,a)$ either holds or does not. 
  Each pair of these 1-traces can be extended to a 2-trace which additionally specifies which (neither, one or both) of $P(a,b)$ and $P(b,a)$ hold.

  A distribution on these 1-traces simply specifies a probability with which $P(x,x)$ holds for any element $x$. It is modelled by an annotated disjunction with two mutually exclusive unary predicates $Q_1$ and $Q_2$ as in Equation~\ref{eqn:LPAD} as well as the additional rule 
  \[P(x,x) \leftarrow Q_1(x).\]
  Disregarding the issue of fixing an ordering, for every 1-trace $\theta$ of a 2-element set $\{a,b\}$ the four extensions $\gamma_{\theta,1}, \dots, \gamma_{\theta,4}$ could be accommodated by $Q_{\theta,1}, \dots, Q_{\theta,4}$, which are made to be mutually disjoint with the correct probabilities.
  For the sake of this example, assume that for any such $\theta$, in $\gamma_{\theta,1}$ $P(a,b)$ holds, but $P(b,a)$ does not hold. 
  The additional rules could take a form such as
  \[P(x,y) \leftarrow Q_{\theta,1}(x,y), Q_1(x), Q_1(y)\]
  where $\theta$ specifies that $a$ and $b$  satisfy the trace encoded by $Q_1$.
  However, the issue arising now is that in a single possible world, $Q_{\theta,1}(a,b)$ could hold alongside $Q_{\theta,1}(b,a)$.
  In that case, the $Q_{\theta,i}$ carry conflicting information, since one of them specifies that exactly $P(a,b)$ should hold and the other that exactly $P(b,a)$ should hold.
  The logic program as written above would deduce that both hold, which is in fact compatible with neither of the two types.

  The proof overcomes this issue by defining additional auxiliary predicates which probabilisitically enforce a preferred ordering on $\{a,b\}$. If the ordering $(a,b)$ is selected, then $P(a,b)$ holds, and if the ordering $(b,a)$ is selected, then $P(b,a)$ holds. Due to the probability of drawing a completely symmetric set of random facts, it cannot be ruled out that no ordering can be selected; however, the probability of this can be made arbitrarily small, so that all those cases can be assigned to one particular symmetric extension.    
\end{exam}

%% It remains to close the gap between the two theorems by relating reducts of projective generalised probabilisitic logic programs to projective reducts of arbitrary generalised probabilistic logic programs.

%% \begin{prop}
%%   Any projective family of distributions expressible as the reduct of a generalised probabilistic logic programs can be characterised as the reduct of a projective probabilistic logic program.
%% \end{prop}

%% \begin{proof}
%%   Let $P$ be a free family of $L'$ distributions and $\Pi$ be a choice of expansions from $L'$ to $L$, such that the reduct of the distribution induced by $(P,\Pi)$ to $\tilde{L}$ is the projective family of distributions $\tilde{P}$.

%%   We can assume without loss of generality that $L = L' \cup \tilde{L}$ since otherwise we can simply consider the reduct to $L' \cup \tilde{L}$ instead of $L$.
%%   We claim that this already implies that $(P,\Pi)$ is projctive.
%%   So let $\iota:D_1 \hookleftarrow D_2$ be an injection between finite sets and let $\mathfrak{X}$ be an $L$-structure with domain $D_1$.
%%   It remains to be shown that the probability of $\mathfrak{X}$ equals the probability of $\{\omega\mid\omega_\iota = \mathfrak{X}\}$.
%%   
  
%% \end{proof}

\section{Discussion and conclusion}
We introduced a functorial definition of projectivity and the generalised distribution semantics, capturing the core idea of the distribution semantics independently of the deterministic framework that is used on top of it.

The main results of the paper showed that all projective families of distributions that can be represented in the generalised distribution semantics satisfy the Strong Independence Property, which restricts models to a stochastic block model construction as well as an additional property that ensures that symmetry is possible.
In practice, one is less interested in the total distribution of a probabilistic logic program and more in its reducts; the free component of the probabilistic logic program is usually seen as ``error terms'' or ``choice predicates'' that are marginalised out, and the deterministic part will often contain auxiliary predicates too.

Consider for instance a ProbLog program for a stochastic block model  modelling smokers being more likely to be friends with other smokers. This could involve the following typical probabilistic clause:

\begin{verbatim}
  0.8 :: friends(X,Y) :- smokes(X), smokes(Y).
\end{verbatim}

To express this in the distribution semantics, the probability annotation is translated to the clauses
\begin{align}
  \mathrm{friends}(X,Y) &\leftarrow \mathrm{smokes}(X), \mathrm{smokes}(Y), \mathrm{u}(X,Y). \\
  0.8 &:: \mathrm{u}(X,Y).
\end{align}

where u is an additional binary predicate added to the language.
The distributions we are ultimately interested in are over worlds in the original language, without the additional predicate u.

Thus the main results give a complete characterisation of the reducts of projective generalised probabilistic logic programs as exactly the reducts of projective families of distributions with the SIP that are not essentially asymmetric.
Indeed, by Theorem \ref{ProjPLPSIP} and Proposition \ref{ProjPLPnotEssAsym}, every reduct of a projective generalised PLP is a reduct of a projective family of distributions with the SIP that is not essentially asymmetric.
By Theorem \ref{SIPnotEAisRed}, every such family of distributions is a reduct of a projective (generalised) PLP, and since the reduct of a reduct is itself a reduct, the claim follows.

Unfortunately SIP is not conserved under reduct, since reducts of traces are usually not traces in the smaller signature. However, the {\em (Constant) Independence Property}, a much studied weaker condition than SIP, is clearly preserved (see \cite{ParisV15,RonelV16} for more background on this notion and the context of the following paragraphs):

\begin{defn}
  A family of distributions satisfies \emph{IP} if the following holds:

  Let $\varphi$ and $\psi$ be quantifier-free $L$ formulas whose variables have been ground to elements of a domain, and who do not mention any joint element. Then the probability of $\varphi \wedge \psi$ is the product of the probabilities of $\varphi$ and $\psi$.
\end{defn}

Therefore every reduct of a projective generalised PLP satisfies IP, which precludes modelling dependencies between the validity of relation symbols for different tuples in a projective way.
In unary signatures, the situation is clearer: There, IP and SIP coincide, and the families that can be modelled are precisely those which are expressible by independently choosing 1-traces for every element according to a prescribed probability for every 1-trace. The restrictiveness of this fragment can be gleaned from de~Finetti's representation theorem, which uniquely represents any projective family of distributions over a unary signature as an {\em infinite mixture\/} of such basic distributions \citep[Chapter 9]{ParisV15}. 

This leads into the next dimension of generalisation: In general, probabilistic logic programming also allows propositions, that is, 0-ary predicates.
These straightforwardly extend the distributions that can be represented as reducts by allowing finite mixtures of the distributions that can be represented without them \citep{Weitkaemper21}.
It is clear that neither of the independence properties generalise to mixtures of distributions.
In the unary case, this allows for representing finite mixtures of the basic distributions. However, the (non-trivial) distributions of ``Carnap's continuum'', the fundamental distributions of unary pure inductive logic, are all infinite mixtures and therefore cannot be represented by generalised PLP \citep[Chapter 16]{ParisV15}.

In the polyadic, the situation is more complicated. Unlike in the unary case, not every projective family of distributions is an infinite mixture of distributions with SIP.\@ Indeed, all infinite mixtures of SIP distributions share a strengthened notion of exchangeability known as {\em signature exchangeability\/} \citep{RonelV16}. In the binary case, it is known that a projective distribution has signature exchangeability if and only if it is an infinite mixture of SIP distributions; in higher arities this is still open.
However, signature exchangeability is not conserved under reducts either, and we do not know of a weaker property implied by it that {\em is\/} conserved.

It is also worthwhile to compare our results with Malhotra and Serafini's recent analysis of projective 2-variable Markov logic networks \cite{MalhotraS22}.
They claim to show that projective Markov logic networks that only use two variables in any formula are precisely the relational block models, or, in other words, precisely those with SIP.\@
However, in a Markov logic network with real-valued weights, every possible world has positive probability, which must therefore be included as an implicit additional assumption to SIP.\@
This additional assumption is in fact stronger than not be essentially asymmetric, and therefore we can conclude that every projective Markov logic network for which every formulas uses at most two variables can be expressed as the reduct of a determinate probabilistic logic program.
Note however that the two-variable assumption is a significant restriction, since it limits attention to at most binary relations and does not allow for the expression of concepts like transitivity.
Indeed, inference in the fragment of two-variable Markov logic networks is tractable and admitting of a closed form solution, while the general problem of inference in Markov logic networks is intractable \cite{MalhotraS22wmc}.
Thus, bringing our understanding of general projective Markov logic networks to the same level as our understanding of (generalised) probabilistic logic programs is a promising avenue for future research. 

As the generalised distribution semantics is formulated abstractly, it captures any possible method to allocate a unique model to every choice of probabilistic facts.  This includes traditional probabilistic logic programming based on Datalog programs with stratified negation as well as any extension of stratified negation that pins down a unique model, for instance logic programs with unique stable models.
Note also that the setting we adopt here requires a relational signature, and therefore does not support function symbols or compound terms in the queries or in the free part of the program.
However, this does not constrain any auxiliary predicates or procedures used within the definition of the logical part of the program.

Other extensions of logic programming such as answer set programming or disjunctive logic programming go beyond this and support programs with several stable models.
There are different approaches for adding probabilistic facts to such extensions.
A recently popular one is the credal semantics proposed by Lukasiewicz \cite{Lukasiewicz07}, which essentially allocates a set of probability distributions corresponding to the stable models of the logic program;
Cozman ans Mau{\'{a}} \cite{CozmanM20} provide an in-depth treatment of how this approach can be carried out for answer set programs.
However, since the credal semantics does not specify a single random world for a domain, it does not result in a family of distributions in the sense discussed here. 

One way to retain a unique probability distribution over models of a domain is to assume the principle of indifference.
This means simply dividing the weight equally between each of the multiple stable models.
This concept has been realised in the languages P-Log \citep{BaralGR09} and Probabilistic Disjunctive Logic Programming \citep{Ngo96} for answer set programming and probabilistic disjunctive logic programming respectively.  

Such a set-up is outside the generalised distribution semantics, since we no longer allocate a unique extension to every intensional world.
Indeed, we can see that it is not generally possible to extend the intensional signature and obtain any P-log program or disjunctive logic program as a reduct, since both can define essentially asymmetric models.
Take for instance the answer set program
\[
R(x,y) \leftarrow \neg R(y,x), x \neq y.
\]
The stable models of this answer set program are precisely those where each two nodes $a$ and $b$ are connected by exactly one directed edge, either from $a$ to $b$ or from $b$ to $a$. By the principle of indifference, every such model is allocated equal probability; however, none of them are symmetric.

The same argument applies to probabilistic disjunctive logic programming, using the disjunctive logic program
\[
R(x,y) \vee R(y,x).
\]
in place of the answer set program given above.

This shows that beyond the limitations of the concrete fragment of Prolog implemented in a probabilistic logic programming language,
the foundational underpinnings of the distribution semantics as a deterministic program over probabilistic facts is itself restrictive of the families of distributions that can be represented.
We have seen that in the case of projective families of distributions, particularly simple families that allow for marginal inference independent of the domain size, improving the power of the logic used to define the deterministic component does not increase expressivity beyond quantifier-free first-order formulas (or equivalently determinate logic programs).
Furthermore, by giving an explicit description of the expressible projective families as reducts of relational stochastic block models, we see that the vast majority of projective families remain out of reach of formalisms based directly on the distribution semantics.
  \bibliographystyle{elsarticle-num} 
  \bibliography{Probabilisticlogicbib}

\begin{thebibliography}{10}
\expandafter\ifx\csname url\endcsname\relax
  \def\url#1{\texttt{#1}}\fi
\expandafter\ifx\csname urlprefix\endcsname\relax\def\urlprefix{URL }\fi
\expandafter\ifx\csname href\endcsname\relax
  \def\href#1#2{#2} \def\path#1{#1}\fi

\bibitem{Poole93}
D.~Poole, Probabilistic {H}orn abduction and {B}ayesian networks, Artificial
  Intelligence 64~(1) (1993) 81--129.
\newblock \href {https://doi.org/10.1016/0004-3702(93)90061-F}
  {\path{doi:10.1016/0004-3702(93)90061-F}}.

\bibitem{Sato95}
T.~Sato, A statistical learning method for logic programs with distribution
  semantics, in: L.~Sterling (Ed.), Logic Programming, Proceedings of the
  Twelfth International Conference on Logic Programming, Tokyo, Japan, June
  13-16, 1995, {MIT} Press, 1995, pp. 715--729.

\bibitem{deRaedtK15}
L.~{De Raedt}, A.~Kimmig, Probabilistic (logic) programming concepts, Mach.
  Learn. 100~(1) (2015) 5--47.
\newblock \href {https://doi.org/10.1007/s10994-015-5494-z}
  {\path{doi:10.1007/s10994-015-5494-z}}.

\bibitem{SuciuORK11}
D.~Suciu, D.~Olteanu, C.~R{\'{e}}, C.~Koch, Probabilistic Databases, Synthesis
  Lectures on Data Management, Morgan {\&} Claypool Publishers, 2011.
\newblock \href {https://doi.org/10.2200/S00362ED1V01Y201105DTM016}
  {\path{doi:10.2200/S00362ED1V01Y201105DTM016}}.

\bibitem{CozmanM19}
F.~G. Cozman, D.~D. Mau{\'{a}}, The finite model theory of {B}ayesian network
  specifications: Descriptive complexity and zero/one laws, Int. J. Approx.
  Reason. 110 (2019) 107--126.
\newblock \href {https://doi.org/10.1016/j.ijar.2019.04.003}
  {\path{doi:10.1016/j.ijar.2019.04.003}}.

\bibitem{Koponen20}
V.~Koponen, Conditional probability logic, lifted bayesian networks, and almost
  sure quantifier elimination, Theor. Comput. Sci. 848 (2020) 1--27.
\newblock \href {https://doi.org/10.1016/j.tcs.2020.08.006}
  {\path{doi:10.1016/j.tcs.2020.08.006}}.

\bibitem{JaegerS18}
M.~Jaeger, O.~Schulte, Inference, learning, and population size: Projectivity
  for {SRL} models, in: Eighth International Workshop on Statistical Relational
  AI (StarAI), 2018.
\newblock \href {http://arxiv.org/abs/1807.00564} {\path{arXiv:1807.00564}}.

\bibitem{JaegerS20}
M.~Jaeger, O.~Schulte, A complete characterization of projectivity for
  statistical relational models, in: C.~Bessiere (Ed.), Proceedings of the
  Twenty-Ninth International Joint Conference on Artificial Intelligence,
  {IJCAI} 2020, ijcai.org, 2020, pp. 4283--4290.
\newblock \href {https://doi.org/10.24963/ijcai.2020/591}
  {\path{doi:10.24963/ijcai.2020/591}}.

\bibitem{Carnap50}
R.~Carnap, Logical Foundations of Probability, University of Chicago Press,
  1950.

\bibitem{ParisV15}
J.~Paris, A.~Vencovsk\'{a}, Pure inductive logic, Perspectives in Logic,
  Association for Symbolic Logic, Ithaca, NY; Cambridge University Press, 2015.
\newblock \href {https://doi.org/10.1017/CBO9781107326194}
  {\path{doi:10.1017/CBO9781107326194}}.

\bibitem{RonelV16}
T.~Ronel, A.~Vencovsk\'{a}, The principle of signature exchangeability, Journal
  of Applied Logic 15 (2016) 16--45.
\newblock \href {https://doi.org/10.1016/j.jal.2015.11.002}
  {\path{doi:10.1016/j.jal.2015.11.002}}.

\bibitem{Weitkaemper21}
F.~Weitk{\"{a}}mper, An asymptotic analysis of probabilistic logic programming,
  with implications for expressing projective families of distributions, Theory
  Pract. Log. Program. 21~(6) (2021) 802--817.
\newblock \href {https://doi.org/10.1017/S1471068421000314}
  {\path{doi:10.1017/S1471068421000314}}.

\bibitem{MalhotraS22}
S.~Malhotra, L.~Serafini, On projectivity in markov logic networks, in:
  M.~Amini, S.~Canu, A.~Fischer, T.~Guns, P.~K. Novak, G.~Tsoumakas (Eds.),
  Machine Learning and Knowledge Discovery in Databases - European Conference,
  {ECML} {PKDD} 2022, Grenoble, France, September 19-23, 2022, Proceedings,
  Part {V}, Vol. 13717 of Lecture Notes in Computer Science, Springer, 2022,
  pp. 223--238.
\newblock \href {https://doi.org/10.1007/978-3-031-26419-1\_14}
  {\path{doi:10.1007/978-3-031-26419-1\_14}}.

\bibitem{Weitkaemper23}
F.~Weitk{\"{a}}mper, Projective families of distributions revisited, Int. J.
  Approx. Reason. 162 (2023) 109031.
\newblock \href {https://doi.org/10.1016/J.IJAR.2023.109031}
  {\path{doi:10.1016/J.IJAR.2023.109031}}.

\bibitem{deRaedtKNP16}
L.~D. Raedt, K.~Kersting, S.~Natarajan, D.~Poole, Statistical Relational
  Artificial Intelligence: Logic, Probability, and Computation, Synthesis
  Lectures on Artificial Intelligence and Machine Learning, Morgan {\&}
  Claypool Publishers, 2016.
\newblock \href {https://doi.org/10.2200/S00692ED1V01Y201601AIM032}
  {\path{doi:10.2200/S00692ED1V01Y201601AIM032}}.

\bibitem{CozmanM18}
F.~G. Cozman, D.~D. Mauá, The complexity of bayesian networks specified by
  propositional and relational languages, Artificial Intelligence 262 (2018)
  96--141.
\newblock \href {https://doi.org/https://doi.org/10.1016/j.artint.2018.06.001}
  {\path{doi:https://doi.org/10.1016/j.artint.2018.06.001}}.

\bibitem{VandenBroeckKNP21}
G.~Van~den Broeck, K.~Kersting, S.~Natarajan, D.~Poole (Eds.), An Introduction
  to Lifted Probabilistic Inference, MIT Press, 2021.

\bibitem{Riguzzi23}
F.~Riguzzi, Foundations of Probabilistic Logic Programming: Languages,
  Semantics, Inference and Learning, 2nd Edition, River Publishers, 2023.

\bibitem{Weitkaemper22a}
F.~Weitk{\"{a}}mper, Statistical relational artificial intelligence with
  relative frequencies: {A} contribution to modelling and transfer learning
  across domain sizes, CoRR abs/2202.10367 (2022).
\newblock \href {http://arxiv.org/abs/2202.10367} {\path{arXiv:2202.10367}}.

\bibitem{Leinster14}
T.~Leinster, Basic Category Theory, Cambridge Studies in Advanced Mathematics,
  Cambridge University Press, 2014.
\newblock \href {https://doi.org/10.1017/CBO9781107360068}
  {\path{doi:10.1017/CBO9781107360068}}.

\bibitem{HollandLL83}
P.~W. Holland, K.~B. Laskey, S.~Leinhardt, Stochastic blockmodels: First steps,
  Social Networks 5~(2) (1983) 109--137.
\newblock \href {https://doi.org/https://doi.org/10.1016/0378-8733(83)90021-7}
  {\path{doi:https://doi.org/10.1016/0378-8733(83)90021-7}}.

\bibitem{Clark87}
K.~L. Clark, Negation as Failure, Morgan Kaufmann Publishers Inc., San
  Francisco, CA, USA, 1987, p. 311–325.

\bibitem{VennekensVB04}
J.~Vennekens, S.~Verbaeten, M.~Bruynooghe, Logic programs with annotated
  disjunctions, in: B.~Demoen, V.~Lifschitz (Eds.), Logic Programming, 20th
  International Conference, {ICLP} 2004, Saint-Malo, France, September 6-10,
  2004, Proceedings, Vol. 3132 of Lecture Notes in Computer Science, Springer,
  2004, pp. 431--445.
\newblock \href {https://doi.org/10.1007/978-3-540-27775-0\_30}
  {\path{doi:10.1007/978-3-540-27775-0\_30}}.

\bibitem{MalhotraS22wmc}
S.~Malhotra, L.~Serafini, Weighted model counting in {FO2} with cardinality
  constraints and counting quantifiers: {A} closed form formula, in:
  Thirty-Sixth {AAAI} Conference on Artificial Intelligence, {AAAI} 2022,
  Virtual Event, February 22 - March 1, 2022, {AAAI} Press, 2022, pp.
  5817--5824.
\newblock \href {https://doi.org/10.1609/AAAI.V36I5.20525}
  {\path{doi:10.1609/AAAI.V36I5.20525}}.

\bibitem{Lukasiewicz07}
T.~Lukasiewicz, Probabilistic description logic programs, Int. J. Approx.
  Reason. 45~(2) (2007) 288--307.
\newblock \href {https://doi.org/10.1016/j.ijar.2006.06.012}
  {\path{doi:10.1016/j.ijar.2006.06.012}}.

\bibitem{CozmanM20}
F.~G. Cozman, D.~D. Mau{\'{a}}, The joy of probabilistic answer set
  programming: Semantics, complexity, expressivity, inference, Int. J. Approx.
  Reason. 125 (2020) 218--239.
\newblock \href {https://doi.org/10.1016/j.ijar.2020.07.004}
  {\path{doi:10.1016/j.ijar.2020.07.004}}.

\bibitem{BaralGR09}
C.~Baral, M.~Gelfond, J.~N. Rushton, Probabilistic reasoning with answer sets,
  Theory Pract. Log. Program. 9~(1) (2009) 57--144.
\newblock \href {https://doi.org/10.1017/S1471068408003645}
  {\path{doi:10.1017/S1471068408003645}}.

\bibitem{Ngo96}
L.~Ngo, Probabilistic disjunctive logic programming, in: E.~Horvitz, F.~V.
  Jensen (Eds.), {UAI} '96: Proceedings of the Twelfth Annual Conference on
  Uncertainty in Artificial Intelligence, Reed College, Portland, Oregon, USA,
  August 1-4, 1996, Morgan Kaufmann, 1996, pp. 397--404.

\end{thebibliography}
\end{document}